\global\long\def\cip{\overset{p}{\rightarrow}}
\providecommand{\argmax}    {\operatornamewithlimits{arg\,max}}
\providecommand{\defas}     {\mathrel{\triangleq}}
\newcommand{\set}[1]{\mathcal{#1}}
\providecommand{\joint}[1]{\vec{#1}}
\providecommand{\excl}[1]{-{#1}}
\providecommand{\argsA}[2]{ {#1}_{#2} } %
\providecommand{\argsG}[2]{ {\joint{#1}}_{#2} } %
\providecommand{\argsI}[2]{ {#1}_{#2} } %
\providecommand{\hor}{h}
\providecommand{\AC}{a}                 %
\providecommand{\ACS}{\set{A}}                     %
\providecommand{\aAS}[1]{\argsA {\ACS}  {#1}}
\providecommand{\ja}    {\joint {\AC}}          %
\providecommand{\jaG}[1]    {\argsG{\AC}    {#1}}       %
\providecommand{\jaGS}[1]   {\argsG{\ACS}   {#1}}       %
\providecommand{\POL}{\pi}
\providecommand{\jpol}      {\joint{\POL}}	                %
\providecommand{\jpolG}[1]  {\argsG     {{\POL}}{#1}}	%
\providecommand{\Q}{Q}
\providecommand{\QI}[1]    {\argsI\Q{#1}}
\providecommand{\neig}[1]{\mathcal{N}({#1})} %
\let\subfloat\subfigure
\providecommand{\E}{\mathbf{E}} 
\providecommand{\hor}{H} 
\newcommand{\sect}[1]{Sec.~\ref{sec:#1}}
\newcommand{\eq}[1]{\eqref{eq:#1}}
\newcommand{\fig}[1]{Fig.~\ref{fig:#1}}
\newcommand{\alg}[1]{Algorithm~\ref{alg:#1}}
\providecommand{\newdef}[2]{\newtheorem{#1}{#2}}
\newtheorem{theorem}{Theorem}
\newtheorem{proposition}{Proposition}
\providecommand{\ourpar}[1]{
\vspace{0.2em}
\noindent
\textbf{#1.}
}
\providecommand{\ja}    {\joint {\AC}}   
\providecommand{\jaG}[1]    {\argsG{\AC}    {#1}} 
\providecommand{\jaGS}[1]   {\argsG{\ACS}   {#1}} 
\providecommand{\argsG}[2]{ {\joint{#1}}_{#2} } 
\providecommand{\argsA}[2]{ {#1}_{#2} } 
\providecommand{\aAS}[1]{\argsA {\ACS}  {#1}}      
\providecommand{\AC}{a}    
\providecommand{\ACS}{\set{A}}            
\providecommand{\joint}[1]{\vec{#1}}
\global\long\def\cip{\overset{p}{\rightarrow}}
\providecommand{\jpol}      {\joint{\POL}}	
\providecommand{\POL}{\pi}
\providecommand{\defas}     {\mathrel{\triangleq}}
\providecommand{\QI}[1]    {\argsI\Q{#1}} 
\providecommand{\argsI}[2]{ {#1}_{#2} } %
\providecommand{\Q}{Q}
\providecommand{\neig}[1]{\mathcal{N}({#1})} %
\renewcommand{\cite}[1]{\citep{#1}}
\providecommand{\ourpar}[1]{
\vspace{0.2em}
\noindent
\textbf{#1.}
}
\providecommand{\E}{\mathbf{E}} 
\providecommand{\hor}{H} 
\let\subfloat\subfigure
\def\<#1>{%
    \expandafter\ifx\csname<#1>\endcsname\relax
        \errmessage{abbreviation <#1> undefined!}
    \else
        \csname<#1>\endcsname
    \fi
}
\def\abbr#1#2{%
    \expandafter\def\csname<#1>\endcsname{#2}%
}
\tikzset{every node/.append style={scale=0.75}}
\begin{document}

\title{Scalable Planning and Learning for Multiagent POMDPs: Extended Version}

\author{
Christopher Amato\\ 
CSAIL, MIT\\ Cambridge, MA 02139 \\
\texttt{camato@csail.mit.edu} 
\and 
Frans A. Oliehoek\\
Informatics Institute, University of Amsterdam\\
Dept. of CS, University of Liverpool\\
\texttt{frans.oliehoek@liverpool.ac.uk}
}

\date{}

\maketitle

\begin{abstract}
Online, sample-based planning algorithms for POMDPs have shown great promise in scaling to problems with large state spaces, but they become intractable for large action and observation spaces. This is particularly problematic in multiagent POMDPs where the action and observation space grows exponentially with the number of agents. To combat this intractability, we propose a novel scalable approach based on sample-based planning and factored value functions that exploits structure present in many multiagent settings. This approach applies not only in the planning case, but also in the Bayesian reinforcement learning setting. Experimental results show that we are able to provide high quality solutions to large multiagent planning and learning problems. 
\end{abstract}

\section{Introduction}

Online planning methods for POMDPs have demonstrated impressive performance \cite{Ross08} on large problems by interleaving planning with action selection. The leading such method, partially observable Monte Carlo planning (POMCP)~\cite{Silver10NIPS23}, 
achieves performance gains by extending sample-based methods based on Monte Carlo tree search (MCTS) to solve POMDPs.

While online, sample-based methods show promise in solving POMDPs with large state spaces, they become intractable as the number of actions or observations grow. This is particularly problematic in the case of multiagent systems. Specifically, we consider multiagent partially observable Markov decision processes (MPOMDPs), which assume all agents share the same partially observable view of the world and can coordinate their actions. 
Because the MPOMDP model is centralized, POMDP methods apply, but the fact that the number of \emph{joint} actions and observations scales exponentially in the number of agents renders current POMDP methods intractable. 

To combat this intractability, we provide a novel sample-based online planning algorithm that
exploits multiagent structure.
Our method, called \emph{factored-value partially observable Monte Carlo
planning (FV-POMCP)}, is based on 
POMCP
%
and is the first MCTS method that  exploits \emph{locality of interaction}: in many MASs,
agents interact directly with a subset of
other agents. This structure enables a decomposition of the value function into a
set of overlapping factors, which can be used to produce high quality solutions
\cite{Guestrin01,Nair05,Kok06JMLR}. 
But unlike these previous approaches, we will not assume a factored model, but only that the value function can be approximately factored. 
We present two variants of FV-POMCP that use different amounts of factorization
of the value function
to scale to large action and observation spaces.

Not only is our FV-POMCP approach applicable to large MPOMDPs, but it is potentially even
more important for Bayesian learning where the agents have uncertainty about the underlying model
as modeled by Bayes-Adaptive POMDPs (BA-POMDPs)~\cite{Ross11}. 
These models translate the learning problem to a planning problem, but since the resulting 
planning problems have an infinite number of states, scalable sample-based planning approaches are
critical to their solution.

We show experimentally that our approach allows both planning and learning to be significantly more efficient in 
multiagent POMDPs.
This evaluation shows that our approach significantly outperforms
regular (non-factored) POMCP, indicating that FV-POMCP is able to effectively exploit
locality of interaction in both settings.

\section{Background }
\label{background}
We first discuss multiagent POMDPs and previous work on Monte Carlo tree search and Bayesian reinforcement learning (BRL) for
 POMDPs. 

\subsection{Multiagent POMDPs}
An MPOMDP  \cite{Messias11} is a multiagent planning model that unfolds over a number of steps. 
 At every stage, agents take individual actions and receive individual
observations. 
However, in an MPOMDP, all individual
observations are shared via communication, allowing the team of agents to act in a
`centralized manner'. We will restrict ourselves 
to the setting where such communication is free of noise, costs and delays. 

An MPOMDP is a tuple $ \langle  I, S, \{A_i\}, T, R, \{Z_i\}, O, \hor\rangle $
with: 
$I$, a set of agents;
$S$, a set of states with designated initial state
distribution $b_0$; 
 $A=\times_iA_i$, the set of joint actions, using action sets for each
agent, $i$;
  $T$, a set of state transition probabilities: 
$T^{s\vec{a}s'} = \Pr(s'| s, \vec{a})$, 
the
probability of transitioning from state $s$ to $s'$ when actions
$\vec{a}$ are taken by the agents; 
$R$, a reward function: $R(s, \vec{a})$, the immediate reward for being in state $s$ and taking 
actions $\vec{a}$; 
$Z=\times_iZ_i$, the set of joint observations, using observation sets for
each agent, $i$;
$O$, a set of observation probabilities:
 $O^{\vec{a}s'\vec{z}} = \Pr(  \vec{z}|\vec{a},s')$, 
 the
probability of  seeing observations $\vec{o}$  given
actions $\vec{a}$  were taken and resulting state $s'$;
$\hor$, the horizon.

An MPOMDP can be reduced to a POMDP with a single
centralized controller that takes joint actions and receives joint
observations~\cite{Pynadath02}.
Therefore, MPOMDPs can be solved with POMDP solution methods, some 
of which will be described in the remainder of this section.
However, such approaches do not exploit the particular structure inherent to
many MASs. In \sect{FVPOMCP}, we present a first online planning method that
overcomes this deficiency. 
 
\subsection{Monte Carlo Tree Search for (M)POMDPs}
Most research for (mutliagent) POMDPs has focused on \emph{planning}:
given a full specification of the model, determine an optimal policy,
$\pi$, mapping past observation histories 
(which can be summarized by distributions $b(s)$ over states called beliefs) to 
actions.
An optimal policy can be extracted from an optimal Q-value function,
$
Q(b,a) = \sum_s R(s,a) + \sum_z P(z|b,a) \max_{a'} Q(b',a')
$,
by acting greedily. 
Computing $Q(b,a)$, however, is complicated by the fact that the space of beliefs is 
continuous.

POMCP~\cite{Silver10NIPS23}, is a scalable method which extends
Monte Carlo tree search (MCTS) 
to solve POMDPs.
At every stage, the algorithm performs online planning, given the current belief, by
incrementally building a lookahead tree that contains (statistics that represent)
$Q(b,a)$. 
The algorithm, however, avoids expensive belief updates by creating nodes not for each
belief, but simply for each action-observation history $h$. In particular, it samples
hidden states $s$ only at the root node (called `root sampling') and uses that state to
sample a trajectory that first traverses the lookahead tree and then performs a (random)
rollout. The return of this trajectory is used to update the statistics for all visited
nodes.
Because this search tree can be enormous, the search is directed to the relevant parts by selecting actions to maximize the `upper confidence
bounds': 
$
U(h,a) = Q(h, a) + c  \sqrt{\log(N + 1) / n}.
$
Here, $N$ is the number of times the history has been reached and $n$ is the number of times
that action $a$ has been taken in that history.  
POMCP
can be shown to
converge to an $\epsilon$-optimal value function. Moreover, the method has
demonstrated good performance in large domains with a limited number of simulations.

\subsection{Bayesian RL for (M)POMDPs}

\emph{Reinforcement learning (RL)}
considers the more realistic case where the model is not (perfectly) known in advance.
Unfortunately, effective RL in POMDPs is very difficult. \citet{Ross11} introduced a
framework, called the Bayes-Adaptive POMDP (BA-POMDP), that reduces the learning
problem to a planning problem, thus enabling advances in planning methods to 
be used in the learning problem. 

In particular, the BA-POMDP utilizes Dirichlet distributions to 
model uncertainty over transitions 
and observations. 
Intuitively, if the agent could observe states and observations,
it could maintain vectors  $\phi$ and $\psi$ of counts for transitions and
observations respectively.  
Let $\phi^a_{ss'}$ be the transition count of the number times
state $s'$ resulted from taking action $a$ in state $s$ and $\psi^a_{s'z}$ be
the observation count representing the number of times observation $z$ was seen
after taking action $a$ and transitioning to state $s'$. 
These counts induce a 
 probability distribution over the possible
transition and observation models.
Even though the agent cannot observe the states
and has uncertainty about the actual count vectors, 
\emph{
this uncertainty can be represented
using the POMDP formalism} --- by including the count vectors as part of the hidden state of a special
POMDP, called a BA-POMDP.

The BA-POMDP can be extended to the multiagent setting \cite{Amato13MSDM}, yielding
the  Bayes-Adaptive multiagent POMDP (BA-MPOMDP) framework.
BA-MPOMDPs are POMDPs, but with an infinite state space since there can be infinitely many
count vectors.  While a quality-bounded reduction to a finite state
space is possible~\cite{Ross11}, the problem is still intractable; sample-based planning
is needed to provide solutions. Unfortunately, current methods, such as POMCP, do not scale well
to multiple agents.

\section{Exploiting Graphical Structure}

POMCP 
is not directly suitable for multiagent problems (in either the planning
or learning setting) due to the fact that the number of joint actions and observations are
exponential in the number of agents.  
We first elaborate on these problems, and then sketch an approach to
mitigate them by exploiting locality between agents.

\subsection{POMCP for MPOMDPs: Bottlenecks}

The large number of joint observations is problematic
since it leads to a lookahead tree with very high branching factor.
Even though this is theoretically not a problem in MDPs \cite{Kearns02ML},
in partially observable settings 
that use particle filters it leads to severe problems.
In particular, in order to have a good particle representation at the next
time step, the actual joint observation received
must be sampled often enough during planning for the previous stage.
If the actual joint observation had not been sampled frequently enough 
(or not at all), the particle filter will be a bad approximation (or collapse).
This results in sampling starting from the initial belief again, or
alternatively, to fall back to acting using a separate (history independent) policy such as a random one.

The issue of large numbers of joint actions is also problematic: the
standard POMCP 
algorithm will, at each node, maintain separate statistics, and thus separate
upper confidence bounds, for each of the exponentially many joint actions. 
Each of the exponentially many joint actions will have to be selected at least a few
times to reduce their confidence bounds (i.e., exploration bonus). 
This is a principled problem:
in cases where
each combination of individual actions may lead to completely different effects,
it may be necessary to 
 try all of them at least a few
times. In many cases, however, the effect of a joint action is factorizable as the
effects of the action of individual agents or small groups of agents. For
instance, consider a team of agents that is fighting fires at a
number of burning houses, as illustrated in \fig{fire}. 
The rewards depend only on the amount of water deposited on each house
rather than the exact joint action taken \cite{Oliehoek08AAMAS}. While this problem lends itself to a natural factorization, other problems may also be factorized to permit approximation. 

\begin{figure}[tb]
\centering
\subfloat[]{\label{fig:fire}\includegraphics[width=40mm]{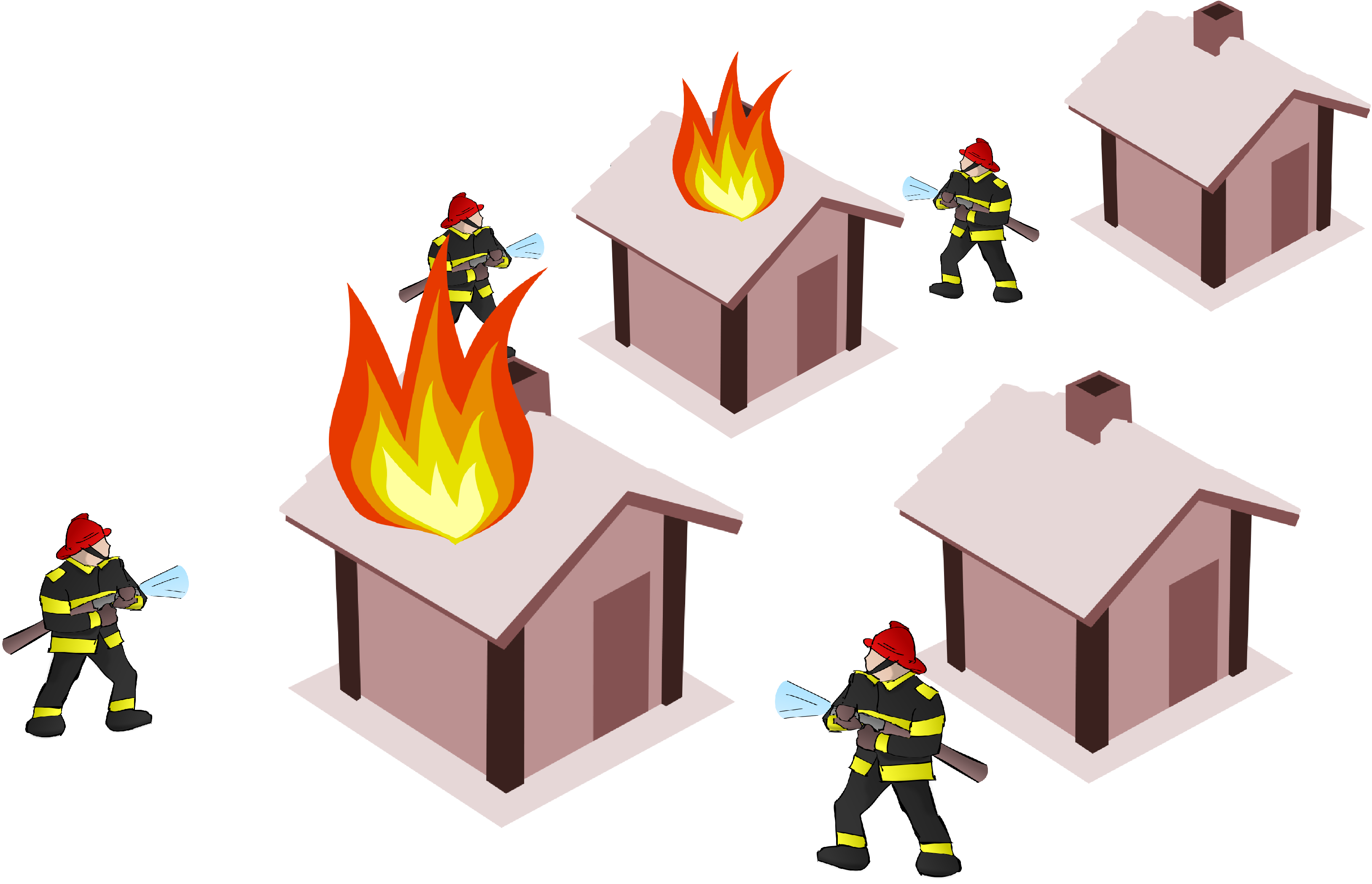}} \hfill
 \subfloat[]{\label{fig:fire-graph}\includegraphics[width=40mm]{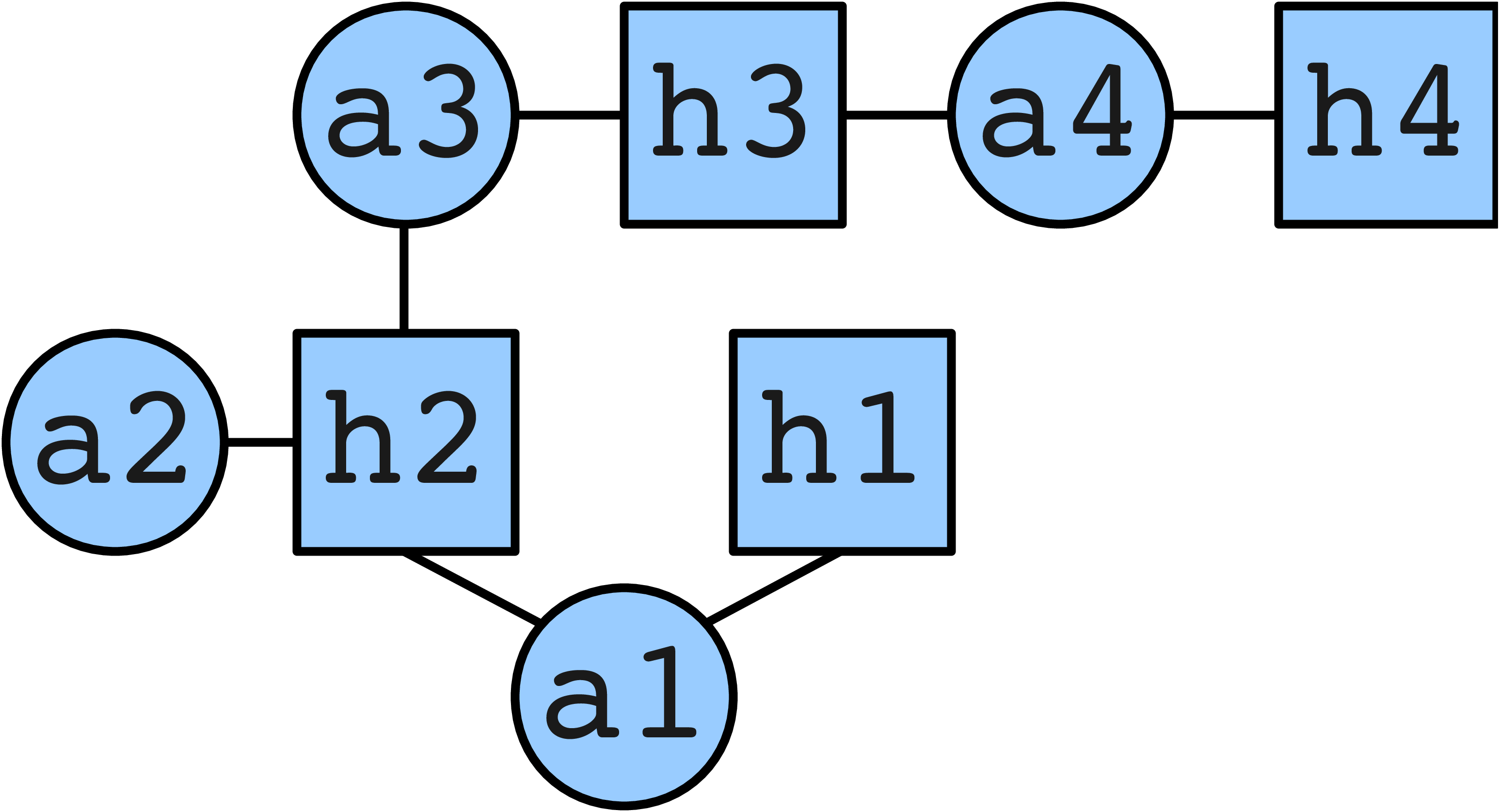}} \hfill
 \subfloat[]{\label{fig:sensor-grid}\includegraphics[width=35mm]{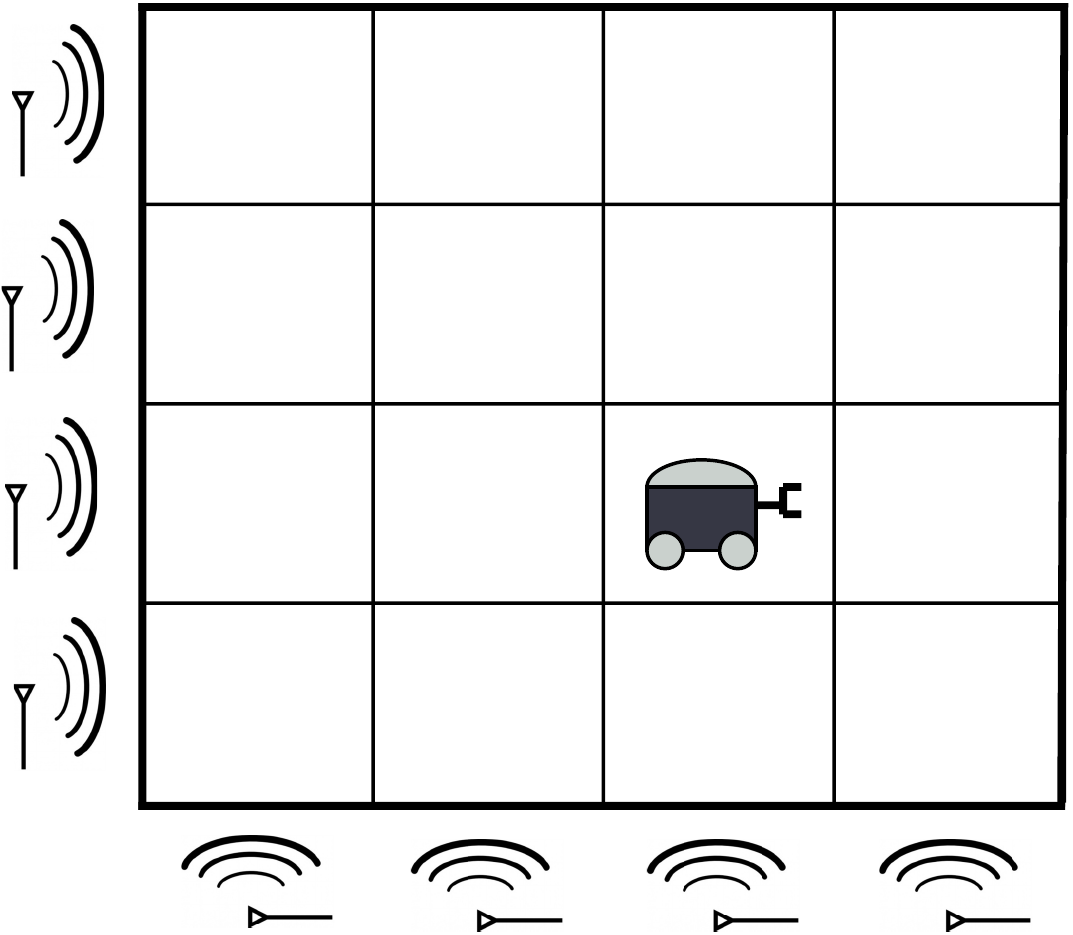}}
\caption{
(a) Illustration of `fire fighting'
(b) 
Coordination graph 
with 4 houses 
and 3 agents
(c)  Illustration of a sensor network problem on a grid that is used in the experiments.
}
\end{figure}

\subsection{Coordination Graphs}

In certain multiagent settings, \emph{coordination (hyper) graphs (CGs)}~\cite{Guestrin01,Nair05} 
have been used to compactly represents interactions between subsets of agents. In this
paper we extend this approach to MPOMDPs. We first introduce the framework of CGs in the
single shot setting.

A CG specifies a set of payoff components $E= \{Q_e\} $, and each component $e$ is associated
with a subset of agents. These subsets (which we also denote using $e$) can be interpreted 
as (hyper)-edges in a graph where the nodes are agents. The goal in a CG is to select a
joint action that maximizes the sum of the local payoff components $Q(\ja)=\sum_e
Q_e(\vec{a}_e)$.\footnote{
    Since we focus on the one-shot setting here, the Q-values in the remainder of this section should
    be interpreted as those for one specific joint history~$h$, i.e.: $Q(\ja) \equiv Q(h,\ja)$.
}
A CG for the fire fighting problem is shown in \fig{fire-graph}.
We follow the cited literature in assuming that a suitable factorization is
easily identifiable by the designer, but it may also be learnable. 
Even if a payoff function $Q(\ja)$ does not factor exactly, it can be approximated by a
CG.  For the moment assuming a \emph{stateless problem} (we will consider the case where histories are included in the next section), an action-value function can be
approximated by 
\begin{equation}
Q(\vec a) \approx \sum_e Q_e({\vec a}_e), 
\label{eq:linear-approximation}
\end{equation}
We refer to this as the linear approximation 
of $Q$, since one can show that this corresponds to an instance of linear regression 
(See \sect{theory}).

Using a factored representation, 
the maximization $\max_{\vec{a}} \sum_e Q_e(\vec{a}_e) $ can be performed efficiently via
variable elimination (VE) \cite{Guestrin01}, or max-sum \cite{Kok06JMLR}.
These algorithms are not exponential in the number of agents, and therefore
enable significant speed-ups for larger number of agents.
The VE algorithm (which we use in the experiments) is exponential in the \emph{induced
width} $w$ of the coordination graph. 

\subsection{Mixture of Experts Optimization}

VE can be applied if the CG is given in advance. When we try to exploit these techniques
in the context of POMCP, however, this is not the case. 
As such, the task we consider here is to
find the maximum of an \emph{estimated}  factored function 
$\hat{Q}(\vec a) = \sum_e \hat{Q}_e(\vec{a}_e)$.
Note that we do not necessarily require the best approximation to the entire $Q$,
as in \eq{linear-approximation}. 
Instead, we seek an estimation $\hat{Q}$ for which the maximizing joint action $\vec{a}^M$
is close to the maximum of the actual (but unknown) Q-value: $Q(\vec{a}^M)
\approx Q(\vec{a}^*)$.

For this purpose, we introduce a technique called \emph{mixture of experts optimization}.
In contrast to methods based on linear approximation \eq{linear-approximation}, we do not
try to learn a best-fit factored Q function, but directly try to estimate the maximizing
joint action. 
The main idea is that for each local action $\vec{a}_e$ we introduce an expert that
predicts the \emph{total} value $\hat{Q}(\vec{a}_e) = \E[ Q(\vec{a}) \mid \vec{a}_e ] $. 
For a joint action, these responses---one of each payoff component $e$---are put in a
mixture with weights $\alpha_e$ and used to predict the maximization joint action:
$
\arg
\max_{\vec{a}}
\sum_{e}  \alpha_{e}\hat{Q}(\vec{a}_e).
$
This equation is the sum of restricted-scope functions, which is identical to the case of linear approximation  
\eq{linear-approximation},  so VE can
be used to perform this maximization effectively. In the remainder of this paper, 
we will integrate the weights and simply write $\hat{Q}_e(\vec{a}_e) =  \alpha_{e}\hat{Q}(\vec{a}_e)$.

The experts themselves are implemented as maximum-likelihood estimators of the total
value. That is, each expert (associated with a particular $\vec{a}_e$) keeps track of the
mean payoff received when $\vec{a}_e$ was performed, which can be done very efficiently.
An additional benefit of this approach is that it allows for efficient estimation of upper
confidence bounds by also keeping track of how often this local action was performed,
which in turns facilitates easy integration in POMCP, as we describe next.

\section{Factored-Value POMCP}
\label{sec:FVPOMCP}

This section presents our main algorithmic contribution: Factored-Value POMCP, which is an
online planning method for POMDPs that can exploit approximate structure in the value
function by applying mixture of experts optimization in the POMCP lookahead search tree.
We introduce two variants of FV-POMCP. 
The first technique, \emph{factored statistics}, only addresses the
complexity introduced by joint actions. The second technique, \emph{factored
trees}, additionally addresses the problem of many joint observations.
FV-POMCP is the first MCTS method to exploit structure in MASs, 
achieving better sample complexity by using factorization to generalize the value function
over joint actions and histories.    
While this method was developed to 
scale POMCP to larger MPOMDPs in terms of number of
agents, the techniques may  be beneficial in other multiagent models and other factored POMDPs.

\subsection{Factored Statistics}
We first introduce \emph{Factored Statistics} which directly applies mixture of experts
optimization to each node in the POMCP search tree.  As shown in \fig{FS}, the tree of joint histories remains the same, but the statistics retained at for each history is now different. That is, rather than maintaining
one set of statistics in each node (i.e, joint history $\vec h$) for the expected value of each joint action $Q(\vec{h},
\vec{a})$, we maintain a set of statistic for each component $e$ that 
estimates the values $Q_e(\vec{h}, \vec{a}_e)$ and corresponding upper confidence bounds.

Joint actions are selected 
according to the maximum (factored) upper confidence bound:
$$
\max_{\vec a}\sum_{e} U_e(\vec h, \vec a_e),
$$
Where 
$
U_e(\vec h, \vec a_e)
\triangleq  
Q_e(\vec h, \vec a_e)
+ c  \sqrt{\log(N_{\vec h} + 1) / n_{\vec a_e}}
$
using the Q-value and the exploration bonus added for that factor.  
For implementation, at each node for a joint history $\vec{h}$, we store the count for the
full history $N_{\vec h}$ as well as the Q-values, $Q_e$, and the counts for actions,
$n_{\vec a_e}$, separately for each component $e$.

\begin{figure}[t]
\centering
\begin{tikzpicture}[level/.style={sibling distance=80mm/#1}, scale=0.6]
\node[](nulltmp){}[edge from parent/.style={draw,white,thick},level distance=1mm]
  child{
	node[circle,draw](root1){\quad\quad}
		child[edge from parent/.style={draw,black,thick},level distance=16mm, black]{node[draw] {\large $\vec a\,^1$}
			child[edge from parent/.style={draw,black,thick},level distance=16mm, black]{node[draw,circle]  {\large $\vec o\,^1$}
			child[edge from parent/.style={draw,black,thick},level distance=16mm, black]{node[draw] {\large $\vec a\,^2$}
			child{node[circle,draw] {\large $\vec o\,^1$}}
			}
			}
		}
		child[edge from parent/.style={draw,black,thick},level distance=16mm, black]{node[draw] {\large $\vec a\,^2$}
			child{node[circle,draw] {\large $\vec o\,^1$}}
			child{node[circle,draw] {\large $\vec o\,^2$}}  
		}
  };
\end{tikzpicture}
\caption{Factored Statistics: joint histories are maintained (for specific joint actions and observations specified by $\vec a\,^j$ and $\vec o\,^k$), but action statistics are factored at each node. }
\label{fig:FS} 
\end{figure}

Since this method retains fewer statistics and performs joint action selection more
efficiently via VE, we expect that it will be more efficient than plain application of
POMCP to the BA-MPOMDP.  However, the complexity due to joint observations is not directly
addressed: because joint histories are used, reuse of nodes and creation of new nodes for
all possible histories (including the one that will be realized) may be limited if the
number of joint observations is large.

\subsection{Factored Trees}
The second technique, called \emph{Factored Trees}, additionally tries to overcome the
large number of joint observations.  It further decomposes the local $Q_e$'s by splitting
joint histories into local histories and distributing them over the factors.  
That is, in this case, we introduce an expert for each local $\vec{h}_e, \vec{a}_e$ pair.
During simulations, the agents know $\vec{h}$ and action selection is conducted by
maximizing over the sum of the upper confidence bounds:
$$
\max_{\vec a}\sum_{e} U_e(\vec h_e, \vec a_e),
$$
where 
$
U_e(\vec h_e, \vec a_e)= Q_e(\vec h_e, \vec a_e)+ c  \sqrt{\log(N_{\vec h_e} + 1)
/ n_{\vec{a}_e}}
$.
We assume that the set of agents with relevant actions and histories for component $Q_e$ are the same, but this can be generalized.
This approach further reduces the number of statistics maintained and increases
the reuse of nodes in MCTS and the chance that nodes in the trees will exist for 
observations that are seen during execution. 
As such, it can increase performance by increasing generalization as well as producing more robust particle filters.

\begin{figure}[t]
\centering
\begin{tikzpicture}[level/.style={sibling distance=60mm/#1}, scale=0.6]
\node[](nulltmp){ \Large $\ldots \quad e \quad \ldots$}[edge from parent/.style={draw,white,thick},level distance=1mm]
  child{
	node[circle,draw](root1){\quad\quad}
		child[edge from parent/.style={draw,black,thick},level distance=16mm, black]{node[draw] {\large $\vec a\,^1_1$}
			child[edge from parent/.style={draw,black,thick},level distance=16mm, black]{node[draw,circle]  {\large $\vec o\,^1_1$}
			child[edge from parent/.style={draw,black,thick},level distance=16mm, black]{node[draw] {\large $\vec a\,^2_1$}
				child{node[circle,draw] {\large $\vec o\,^1_1$}}
				}
			}
		}
		child[edge from parent/.style={draw,black,thick},level distance=16mm, black]{node[draw] { $\vec a\,^2_1$}
			child{node[circle,draw] {$\vec o\,^1_1$}}
			child{node[circle,draw] { $\vec o\,^2_1$}}  
		}
	}
  child{
	node[circle,draw](root1){\quad\quad}
		child[edge from parent/.style={draw,black,thick},level distance=16mm, black]{node[draw] { $\vec a\,^1_{|E|}$}
			child{node[circle,draw] { $\vec o\,^1_{|E|}$}}
		}
		child[edge from parent/.style={draw,black,thick},level distance=16mm, black]{node[draw] {$\vec a\,^2_{|E|}$}
			child[edge from parent/.style={draw,black,thick},level distance=16mm, black]{node[draw,circle]  {\large $\vec o\,^1_{|E|}$}
			child[edge from parent/.style={draw,black,thick},level distance=16mm, black]{node[draw] {\large $\vec a\,^2_{|E|}$}
				child{node[circle,draw] {\large $\vec o\,^1_{|E|}$}}
				}
			}
			child{node[circle,draw] { $\vec o\,^2_{|E|}$}}  
		}
  };
\end{tikzpicture}
\caption{Factored Trees: local histories for are maintained for each factor (resulting in factored history and action statistics). Actions and observations for component $i$ are represented as $\vec a\,^j_i$ and $\vec o\,^k_i$) }
\label{fig:FT}
\end{figure}

This type of factorization has a major effect on the implementation: rather than constructing a single tree, we now construct a number of
trees in parallel, one for each factor 
$e$ as shown in \fig{FT}. A node of the tree for component $e$ now stores the required statistics:
$N_{\vec h_e}$, the count for the local history, $n_{\vec{a}_e}$, the counts
for actions taken in the local tree and $Q_e$ for the tree. 
Finally, we point out that this decentralization of statistics 
has the potential to reduce communication since the components statistics in a decentralized fashion could be updated without knowledge of all observation histories. 

\section{Theoretical Analysis}
\label{sec:theory}

Here, we investigate the approximation quality induced by our factorization
techniques.\footnote{Proofs can be found in Appendix \ref{proofs}.} The
most desirable quality bounds would express the performance relative to `optimal', i.e.,
relative to  flat POMCP, which converges in probability 
 an $\epsilon$-optimal value function.
Even for the one-shot case, this is extremely difficult for any method employing factorization based on linear approximation of Q,  
because 
    Equation \eq{linear-approximation} corresponds to a special case of linear regression. 
In this case, we can write \eq{linear-approximation} 
        in terms of basis functions and weights as:
$
    \sum_{e} Q_e(\vec{a}_e)
=   
    \sum_{e, \vec{a}_e} w_{e, \vec{a}_e} h_{e, \vec{a}_e} ( \vec{a}), 
$
where the $h_{e, \vec{a}_e}$ are the basis functions: 
$ h_{e, \vec{a}_e} ( \vec{a}) = 1$ iff $\vec{a}$ specifies $\vec{a}_e$ for component $e$ 
(and 0 otherwise).
As such, providing guarantees with respect to the optimal $Q(\vec{a})$-value would
require developing a priori bounds for the approximation quality of (a particular type of)
basis functions. This is a very difficult problem for which there is no good solution,
even though these methods are widely studied in machine learning. 

However, we do not expect our methods to perform well on arbitrary $Q$. Instead, we expect
them to perform well when $Q$ is nearly factored, such that \eq{linear-approximation}
approximately holds, since then the local actions contain enough information to make
good predictions. As such, we analyze the behavior of our methods  when the
samples of $Q$ come from a factored function (i.e., as in \eq{linear-approximation}
) contaminated with zero-mean noise. In such cases, we can show the following.
\begin{theorem}The estimate $\hat{Q}$ of $Q$ made by a mixture
of experts converges in probability to the true value plus a sample
policy dependent bias term:
$
\hat{Q}(\ja)\cip Q(\ja)+B_{\jpol}(\ja).
$
The bias is given by a sum of biases induced by pairs $e,e'$:
\[
B_{\jpol}(\ja)\defas\sum_{e}\sum_{e'\neq e}\sum_{\overline{\jaG{e'\setminus e}}}\jpol(\overline{\jaG{e'\setminus e}}|\jaG e)\QI{e'}(\overline{\jaG{e'\setminus e}},\jaG{e'\cap e}).
\]
Here, $\jaG{e'\cap e}$ is the action of the agents that participate
both in $e$ and $e'$ (specified by $\ja$) and $\overline{\jaG{e'\setminus e}}$
are the actions of agents in $e'$ that are not in $e$.
\end{theorem}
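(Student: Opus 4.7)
The plan is to analyze each expert $\hat{Q}_e(\jaG{e})$ in isolation, derive its limit in probability, and then sum across components. Recall that the mixture-of-experts procedure defines each expert as the sample mean of the total payoffs $Q(\ja)$ observed when the local action $\jaG{e}$ was performed. Since samples of $Q$ consist of the factored ground truth plus zero-mean noise, I would first invoke a law of large numbers to conclude
\[
\hat{Q}_e(\jaG{e}) \cip \E_{\jpol}\!\left[Q(\ja) \mid \jaG{e}\right],
\]
where the expectation is taken with respect to the (asymptotic) distribution over joint actions induced by the sample policy $\jpol$ conditioned on $\jaG{e}$. The zero-mean noise drops out, leaving only the deterministic factored part.

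Next, I would substitute the factored form $Q(\ja)=\sum_{e'} \QI{e'}(\jaG{e'})$ and use linearity of expectation to write
\[
\E_{\jpol}\!\left[Q(\ja) \mid \jaG{e}\right] = \QI{e}(\jaG{e}) + \sum_{e'\neq e} \E_{\jpol}\!\left[\QI{e'}(\jaG{e'}) \mid \jaG{e}\right].
\]
The diagonal term $e'=e$ is fully determined by the conditioning event, producing $\QI{e}(\jaG{e})$. For every off-diagonal term, the actions of agents in $e'\cap e$ are pinned by $\jaG{e}$, while the actions of agents in $e'\setminus e$ are random with conditional distribution $\jpol(\overline{\jaG{e'\setminus e}}\mid \jaG{e})$; expanding the expectation as a sum over $\overline{\jaG{e'\setminus e}}$ yields precisely the summand of $B_{\jpol}(\ja)$.

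Finally, I would sum the per-component limits over $e$. Because convergence in probability is preserved by finite linear combinations (continuous mapping theorem), we get
\[
\hat{Q}(\ja) = \sum_{e}\hat{Q}_e(\jaG{e}) \cip \sum_{e}\QI{e}(\jaG{e}) + \sum_{e}\sum_{e'\neq e}\sum_{\overline{\jaG{e'\setminus e}}}\jpol(\overline{\jaG{e'\setminus e}}\mid \jaG{e})\,\QI{e'}(\overline{\jaG{e'\setminus e}},\jaG{e'\cap e}),
\]
which is $Q(\ja) + B_{\jpol}(\ja)$, as claimed.

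The main obstacle is justifying the law-of-large-numbers step: the samples of $Q(\ja)$ collected for a fixed local action $\jaG{e}$ are not i.i.d.\ because the sampling policy $\jpol$ at any given moment depends on the statistics accumulated so far (through UCB-style exploration inside POMCP). To make the argument fully rigorous one would either appeal to a martingale LLN (treating $\jpol$ as a slowly varying policy whose visit frequencies stabilize), or directly invoke the consistency arguments already used to prove convergence of UCB-based estimators in the original POMCP analysis. Everything else reduces to bookkeeping on how $e$ and $e'$ overlap.
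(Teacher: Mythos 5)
Your proposal is correct and follows essentially the same route as the paper's proof: compute the conditional expectation of the return given the local action $\jaG{e}$ under the sampling policy, split off the diagonal term $\QI{e}(\jaG{e})$, marginalize the off-diagonal components over $\jpol(\overline{\jaG{e'\setminus e}}\mid\jaG{e})$, and sum over $e$. Your closing caveat about the non-i.i.d.\ nature of the samples under an adaptive policy is a fair observation, but the paper sidesteps it in the same way you suggest, by treating $\jpol$ as a fixed sampling policy for the purposes of the analysis.
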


Because we observe the global reward for a given set of actions, the bias is caused by correlations in the sampling policy
and the fact that we are overcounting value from other components. When there is no
overlap, and the sampling policy we use is `component-wise': 
$\jpol({\jaG{e'\setminus e}}|\jaG e)=\jpol({\jaG{e'\setminus e}}|\jaG
e')=\jpol({\jaG{e'\setminus e}})$,
this over counting is the same for all local actions $\jaG e$:

\begin{theorem}When value components do not overlap and a
    component-wise sampling policy is used, mixture of experts optimization recovers the
    maximizing joint action.
\end{theorem}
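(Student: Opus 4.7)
The plan is to derive the result as a direct corollary of Theorem~1 by showing that, under the two stated hypotheses, the bias term $B_{\jpol}(\ja)$ becomes a constant that does not depend on $\ja$. Since $\hat{Q}(\ja)\cip Q(\ja)+B_{\jpol}(\ja)$, any constant additive term is irrelevant to the $\arg\max$, so the maximizer of the limiting $\hat{Q}$ coincides with the maximizer of $Q$, and applying variable elimination to $\hat{Q}_e$ recovers $\ja^*$.

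The execution has three short steps. First, I would invoke the non-overlap hypothesis: for $e'\neq e$ we have $e'\cap e=\emptyset$, so $e'\setminus e = e'$ and the shared-action component $\jaG{e'\cap e}$ is vacuous. The inner summand in Theorem~1 therefore reduces to $\QI{e'}(\overline{\jaG{e'}})$, which is a function of $\overline{\jaG{e'}}$ alone. Second, I would apply the component-wise sampling hypothesis
\[
\jpol(\overline{\jaG{e'\setminus e}}\mid \jaG e)=\jpol(\overline{\jaG{e'\setminus e}}),
\]
so the weights in the outer sum no longer depend on $\jaG e$ either. Combining these,
\[
B_{\jpol}(\ja)=\sum_{e}\sum_{e'\neq e}\sum_{\overline{\jaG{e'}}}\jpol(\overline{\jaG{e'}})\,\QI{e'}(\overline{\jaG{e'}}),
\]
which is manifestly independent of $\ja$; call this constant $C$.

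Third, I would finish by noting that the mixture-of-experts estimator converges in probability to $Q(\ja)+C$, so in the limit
\[
\arg\max_{\ja}\hat{Q}(\ja)=\arg\max_{\ja}\bigl(Q(\ja)+C\bigr)=\arg\max_{\ja}Q(\ja)=\ja^*,
\]
and since $\hat{Q}$ has the additive form $\sum_e \hat{Q}_e(\jaG e)$ to which variable elimination applies exactly, mixture-of-experts optimization returns $\ja^*$. There is essentially no obstacle beyond careful bookkeeping of the index sets; the only subtlety worth flagging is that the result is a statement about the limiting (converged) estimate, so one may wish to add a remark about almost-sure or in-probability recovery of $\ja^*$ for sufficiently many samples, rather than recovery at finite sample size.
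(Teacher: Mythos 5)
Your proposal is correct and follows essentially the same route as the paper's proof: both show that under the non-overlap and component-wise hypotheses the bias reduces to a term independent of $\ja$ (the paper writes it as $(E-1)\sum_e SA(e,\jpol)$ with $SA(e,\jpol)=\sum_{\jaG{e}'}\jpolG{e}(\jaG{e}')\QI{e}(\jaG{e}')$), so the additive constant cannot change the maximizing joint action. The only cosmetic difference is that you manipulate the aggregate bias $B_{\jpol}(\ja)$ from Theorem~1 directly, whereas the paper first notes the per-expert bias is independent of $\jaG e$ and then sums over components; the content is identical.
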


Similar reasoning can be used to establish bounds on the performance in the case of
overlapping components, subject to assumptions about properties of the true value
function. 
Let $\neig e$ denote the neighborhood of component $e$: the set of other components $e'$ which
have an overlap with $e$ (those that have at least one agent participating
in them that also participates in $e$). 

\begin{theorem} If for all overlapping components $e,e'$, and any
two `intersection action profiles' $\jaG{e'\cap e},\jaG{e'\cap e}'$
for their intersection, the true value function satisfies 
\begin{equation}
\forall\jaG{e'\setminus e}\qquad\QI{e'}(\jaG{e'\setminus e},\jaG{e'\cap e})-\QI{e'}(\jaG{e'\setminus e},\jaG{e'\cap e}')
\leq
\frac
{\epsilon}
{
|E| 
\cdot |\neig e|
\cdot |\jaGS{e'\setminus e}|
\cdot \jpol(\jaG{e'\setminus e})
},
\end{equation}
with $|\jaGS{e'\setminus e}|$ the number of intersection action profiles,
then mixture of experts optimization, in the limit,
will return a joint action whose value lies within $\epsilon$
of the optimal solution. \end{theorem}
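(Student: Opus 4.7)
The plan is to leverage Theorem 1 to replace the hatted estimate by the true value plus bias, and then reduce the whole statement to a bound on the variation of the bias term $B_{\jpol}$ across different joint actions. Concretely, let $\ja^{M} \defas \arg\max_{\ja} \hat{Q}(\ja)$ be the joint action returned by mixture of experts optimization, and let $\ja^{*} \defas \arg\max_{\ja} Q(\ja)$ denote the true optimizer. By Theorem 1, in the limit $\hat{Q}(\ja) = Q(\ja) + B_{\jpol}(\ja)$ for every $\ja$, so the optimality of $\ja^{M}$ with respect to $\hat{Q}$ yields
\[
Q(\ja^{M}) + B_{\jpol}(\ja^{M}) \;\geq\; Q(\ja^{*}) + B_{\jpol}(\ja^{*}),
\]
which rearranges to $Q(\ja^{*}) - Q(\ja^{M}) \leq B_{\jpol}(\ja^{M}) - B_{\jpol}(\ja^{*})$. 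Thus it suffices to prove that $|B_{\jpol}(\ja) - B_{\jpol}(\ja')| \leq \epsilon$ for any two joint actions $\ja,\ja'$.

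To establish this, I would expand $B_{\jpol}(\ja) - B_{\jpol}(\ja')$ using the formula from Theorem 1 and immediately drop the contribution of pairs $(e,e')$ whose action scopes are disjoint. Indeed, if $e \cap e' = \emptyset$ then $\jaG{e'\cap e}$ is vacuous, so $\QI{e'}(\overline{\jaG{e'\setminus e}}, \jaG{e'\cap e}) = \QI{e'}(\overline{\jaG{e'\setminus e}}, \jaG{e'\cap e}')$ and the corresponding summand vanishes. This reduces the outer sum to $e$ and $e' \in \neig{e}$, which is exactly the structure the assumption is designed to control.

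Next, using the component-wise sampling assumption so that $\jpol(\overline{\jaG{e'\setminus e}}\,|\,\jaG{e}) = \jpol(\overline{\jaG{e'\setminus e}})$, and applying the hypothesis of the theorem to each pointwise difference $\QI{e'}(\overline{\jaG{e'\setminus e}},\jaG{e'\cap e}) - \QI{e'}(\overline{\jaG{e'\setminus e}},\jaG{e'\cap e}')$, I get for each term
\[
\jpol(\overline{\jaG{e'\setminus e}}) \cdot \frac{\epsilon}{|E|\cdot |\neig{e}|\cdot |\jaGS{e'\setminus e}|\cdot \jpol(\overline{\jaG{e'\setminus e}})} \;=\; \frac{\epsilon}{|E|\cdot |\neig{e}|\cdot |\jaGS{e'\setminus e}|}.
\]
Summing over $\overline{\jaG{e'\setminus e}}$ contributes a factor of $|\jaGS{e'\setminus e}|$ and cancels it, leaving $\epsilon/(|E|\cdot |\neig{e}|)$. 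Summing over $e' \in \neig{e}$ cancels $|\neig{e}|$, and summing over $e$ cancels $|E|$, so the total is at most $\epsilon$. Combined with the earlier reduction, this gives $Q(\ja^{*}) - Q(\ja^{M}) \leq \epsilon$.

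The main obstacle is bookkeeping rather than invention: one must verify that the asymmetric roles of $e$ and $e'$ in the bias expression do not cause double counting that would break the cancellation, and one must be careful that the component-wise sampling assumption is being used in exactly the way that lets the $\jpol(\overline{\jaG{e'\setminus e}})$ in the numerator match the one in the denominator of the hypothesis. Once the index structure is pinned down, the bound is a one-line budget argument: the hypothesis apportions $\epsilon$ evenly across $|E|$ components, across each component's $|\neig{e}|$ overlapping neighbors, and across the $|\jaGS{e'\setminus e}|$ action profiles of the non-overlapping agents, with the weight $\jpol(\overline{\jaG{e'\setminus e}})$ in the denominator exactly compensating the probability mass appearing in the bias expression.
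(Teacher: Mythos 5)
Your proposal is correct and follows essentially the same route as the paper's proof: reduce the claim to bounding $|B_{\jpol}(\ja)-B_{\jpol}(\ja')|\leq\epsilon$, discard pairs of disjoint components (noting, as the paper does, that this cancellation also needs the component-wise policy so that $\jpol(\overline{\jaG{e'\setminus e}}|\jaG e)=\jpol(\overline{\jaG{e'\setminus e}}|\jaG{e}')$, not just equality of the $Q_{e'}$ values), and then let the factors $|E|$, $|\neig e|$, $|\jaGS{e'\setminus e}|$ and $\jpol(\jaG{e'\setminus e})$ in the hypothesis cancel term by term. You even make explicit the final step from bounded bias variation to $\epsilon$-optimality of the returned action, which the paper only asserts.
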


The analysis shows that  
a sufficiently local Q-function can be effectively
optimized when using a sufficiently local sampling policy. Under the same assumptions, we can also derive guarantees for the sequential case.
 It is not directly possible to derive bounds for FV-POMCP itself (since it is not possible to demonstrate
that the UCB exploration policy is component-wise), but it seems likely
that UCB exploration leads to an effective policy that nearly satisfies this property. 
Moreover, since bias is introduced by the interaction between action correlations and
differences in `non-local' components, even when using a policy with correlations, the
bias may be limited if the Q-function is sufficiently structured.

In the factored tree case, we can introduce a strong result. 
Because histories for other agents outside the factor are not included and we do not assume independence between factors, 
the 
approximation quality may suffer: where $\vec h$ is Markov,  
this is not the case for the local history
$\vec h_e$. 
As such, the expected return for such a local history depends on the future
policy as well as the past one (via the distribution over histories of agents not included
in $e$). 
This implies that convergence is no longer guaranteed:
\begin{proposition} 
    Factored-Trees FV-POMCP may diverge. 
\end{proposition}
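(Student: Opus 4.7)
The plan is to establish this non-convergence result by constructing an explicit counterexample, since the statement only asserts that divergence is \emph{possible}. The core idea is to exploit exactly the issue flagged in the paragraph preceding the proposition: a local history $\vec h_e$ is not a sufficient statistic, so the expected return conditional on $\vec h_e$ is not a fixed quantity but a functional of the other factors' policies, which themselves are functionals of their Q-estimates, which themselves depend on \emph{this} tree's policy. I would design a small MPOMDP in which this coupling is antagonistic rather than stabilizing.

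First, I would specify a concrete instance: two agents, each with two local actions and two local observations, a tiny hidden state space, and a reward structure chosen so that the best local action for agent~1 depends (with opposite sign) on what agent~2's preferred action currently is, and symmetrically for agent~2. Picking a single-component factorization per agent (so $\vec h_e$ is the local agent's own history) guarantees that neither factor's tree can see the information needed to break this circular dependence. Second, I would verify that, for any \emph{fixed} stationary joint policy, FV-POMCP's maximum-likelihood estimators at each local node would converge to a well-defined limit that depends on that policy; this just invokes the argument behind the earlier theorems applied factor-by-factor.

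Third, I would show that the induced best-response map between the two factors' policies has no fixed point within the set of policies that UCB can asymptotically produce, or alternatively that the only fixed points are unstable. Concretely, I would pick the rewards so that (i) whenever factor~$1$'s empirical mean favors action $A_1$, the trajectories sampled at factor~$2$'s tree carry strictly higher empirical return for $A_2$; and (ii) whenever factor~$2$'s empirical mean favors $A_2$, the trajectories sampled at factor~$1$'s tree carry strictly higher empirical return for $B_1$. As the UCB bonus vanishes with sample count (it is $O(\sqrt{\log N/n})$), the greedy action eventually tracks the argmax of the empirical means; the antagonistic construction then forces the sequence of greedy joint actions to switch infinitely often. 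Since the local Q-estimates are running averages over sample returns whose distribution shifts with each such switch, the $Q_e(\vec h_e, \vec a_e)$ sequences cannot converge to any limit.

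The main obstacle will be step three: handling the stochasticity of UCB carefully enough to rule out that random fluctuations accidentally stabilize the estimates. I expect the cleanest way around this is to make the antagonistic gap in the construction bounded away from zero (a fixed positive margin independent of sample count), so that once the UCB bonus drops below that margin --- which happens almost surely in finite time on any history visited infinitely often --- the switching behavior is forced deterministically. The remainder of the argument is bookkeeping: recording that infinitely many switches imply $\limsup \neq \liminf$ for at least one $Q_e(\vec h_e, \vec a_e)$, which is precisely divergence of the FV-POMCP estimates.
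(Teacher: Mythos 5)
Your route is genuinely different from the paper's. The paper disposes of this proposition in two lines by observing that FT-FV-POMCP with $c=0$ is an instance of greedy Monte Carlo control (SARSA(1)) with linear function approximation and citing a known divergence result for that class; you instead propose to build an explicit antagonistic MPOMDP in which the factors' best responses cycle. A completed construction of that kind would be more informative than the citation, but as written there is a gap at the final step. You claim that infinitely many greedy switches imply $\limsup \neq \liminf$ for at least one $Q_e(\vec h_e, \vec a_e)$. That inference is not valid for the estimators FT-FV-POMCP actually maintains: each $Q_e$ is an incremental running mean over \emph{all} returns ever credited to $(\vec h_e, \vec a_e)$, so after $n$ samples a block of $k$ further samples can move it by at most $O(k/(n+k))$ times the return range. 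The greedy joint action can therefore chatter infinitely often while every $Q_e$ converges --- for instance, if the feedback drives the two competing local actions' policy-dependent values to a common limit, switching is sustained only by vanishing fluctuations and the averages settle. To obtain actual non-convergence of the estimates you must show that the regime durations grow proportionally to (or faster than) the elapsed time, so that the Ces\`aro means oscillate with non-vanishing amplitude; this is precisely the hard part of classical cycling constructions (Shapley-style fictitious-play cycles, SARSA chattering) and is exactly what you have deferred to ``bookkeeping.''

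A second, smaller concern: your fixed-margin device addresses only the UCB exploration bonus, not the feedback damping. When factor $1$'s preference flips, the return distribution seen by factor $2$ shifts the empirical means only gradually (again because they are running averages), so the ``strictly higher empirical return'' posited in your conditions (i) and (ii) is a statement about asymptotic conditional expectations under the new regime, not about what the finite-sample means show at the moment of the switch; you would need to argue that this lag does not quench the oscillation. None of this means the approach is doomed --- such examples do exist for this algorithm class --- but the proposal currently asserts the conclusion of the divergence analysis rather than supplying it, whereas the paper's reduction, terse as it is, at least inherits a complete argument from the cited result.
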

\begin{proof}
        FT-FV-POMCP (with $c=0$) corresponds to a general case of Monte Carlo control (i.e., SARSA(1)) with linear function approximation that is greedy w.r.t. the current value function. Such settings may result in divergence \cite{Fairbank12IJCNN}.
\end{proof}
Even though this is a negative result, and there is no guarantee of 
convergence for FT-FV-POMCP, in practice this need not be a problem;
many reinforcement learning techniques that can diverge 
(e.g., neural networks) 
can produce high-quality results in practice, e.g., \cite{Tesauro95CACM,Stone01ICML}. 
Therefore, we expect that if the problem exhibits enough locality, the factored
trees approximation may allow good quality policies to be found very quickly. 

Finally, we analyze the computational complexity. 
FV-POMCP is implemented by modifying POMCP's \textsc{Simulate} function 
(as described in Appendix \ref{code}).
The maximization is performed by variable elimination, which has complexity 
$O(n |\aAS{max}|^w)$ with $w$ the induced width and $|\aAS{max}|$ the size of the largest
action set. In addition, the algorithm updates each of the $|E|$ components,
bringing the total complexity of one call of simulate to 
$O(|E| + n |\aAS{max}|^w)$.

\section{Experimental Results}

Here, we empirically investigate the effectiveness of our factorization methods by comparing them to non-factored methods in the planning and learning settings.

\ourpar{Experimental Setup}
We test our methods on versions of the firefighting problem from
Section~\ref{sec:FVPOMCP} and on sensor network problems.
In the firefighting problems, fires are suppressed more quickly if a larger number of agents choose that
particular house.  Fires also spread to neighbor's houses and can start at any
house with a small probability.  In the sensor network problems (as shown by \fig{sensor-grid}), sensors were aligned along discrete intervals on two axes with rewards for tracking a target that moves in a grid. Two types of sensing could be employed by each agent (one more powerful than the other, but using more energy) or the agent could do nothing. A higher reward was given for two agents correctly sensing a target at the same time. 
The firefighting problems were broken up into $n-1$ overlapping factors with 2 agents in each (representing the agents on the two sides of a house) and the sensor grid problems were broken into $n/2$ factors with $n/2+1$ agents in each (representing all agents along the y axis and one agent along the x axis).  
For the firefighting problem with 4 agents, $|S|$ = 243, $|A|$ = 81 and $|Z|$ = 16
 and with
10 agents,  $|S|$ = 177147, $|A|$ = 59049 and $|Z|$ = 1024. For the sensor network problems with 4 agents, $|S|$ = 4, $|A|$ = 81 and $|Z|$ = 16
and with
8 agents, $|S|$ = 16, $|A|$ = 6561 and $|Z|$ = 256.

Each experiment was run for a given number of \emph{simulations}, the number of samples
used at each step to choose an action, and averaged over a number of
\emph{episodes}. 
We report undiscounted return with the standard
error.
Experiments were run on a single core of a 2.5 GHz machine with 8GB of memory. 
In both cases, we compare our factored representations to the flat version using POMCP. This comparison uses the same code base so it directly shows the difference when using factorization. POMCP and similar sample-based planning methods have already been shown to be state-of-the-art methods in both POMDP planning \cite{Silver10NIPS23} and learning \cite{Ross11}.

\ourpar{MPOMDPs}
We start by comparing the factored statistics (FS) and factored
tree (FT) versions of FV-POMCP in multiagent planning problems. Here, the agents are given the true MPOMDP model (in the form of a simulator)
and use it to plan.
For this setting, we compare to two baseline methods:
\emph{POMCP}:  regular POMCP applied to the MPOMDP, and
\emph{random}: uniform random action selection.
Note that 
while POMCP will converge to an $\epsilon$-optional solution, the solution quality may be poor when using small number of simulations. 

\begin{figure}[bt]
\centering
\subfigure[MPOMDP results]{
\label{fig:FFGtrue}
\hspace{-5pt}
\includegraphics[width=2.05in]{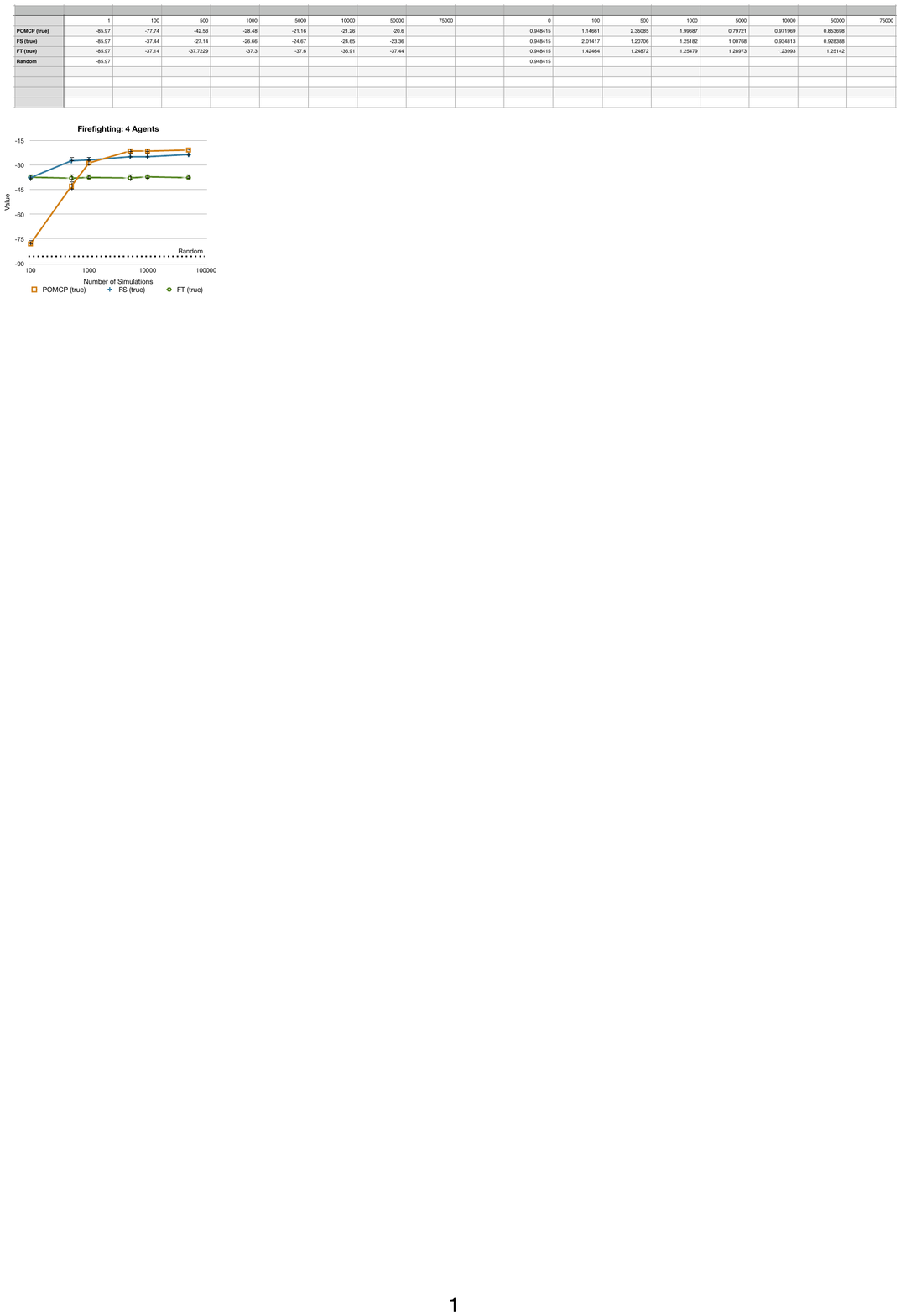}\hspace{-5pt}
\includegraphics[width=2.1in]{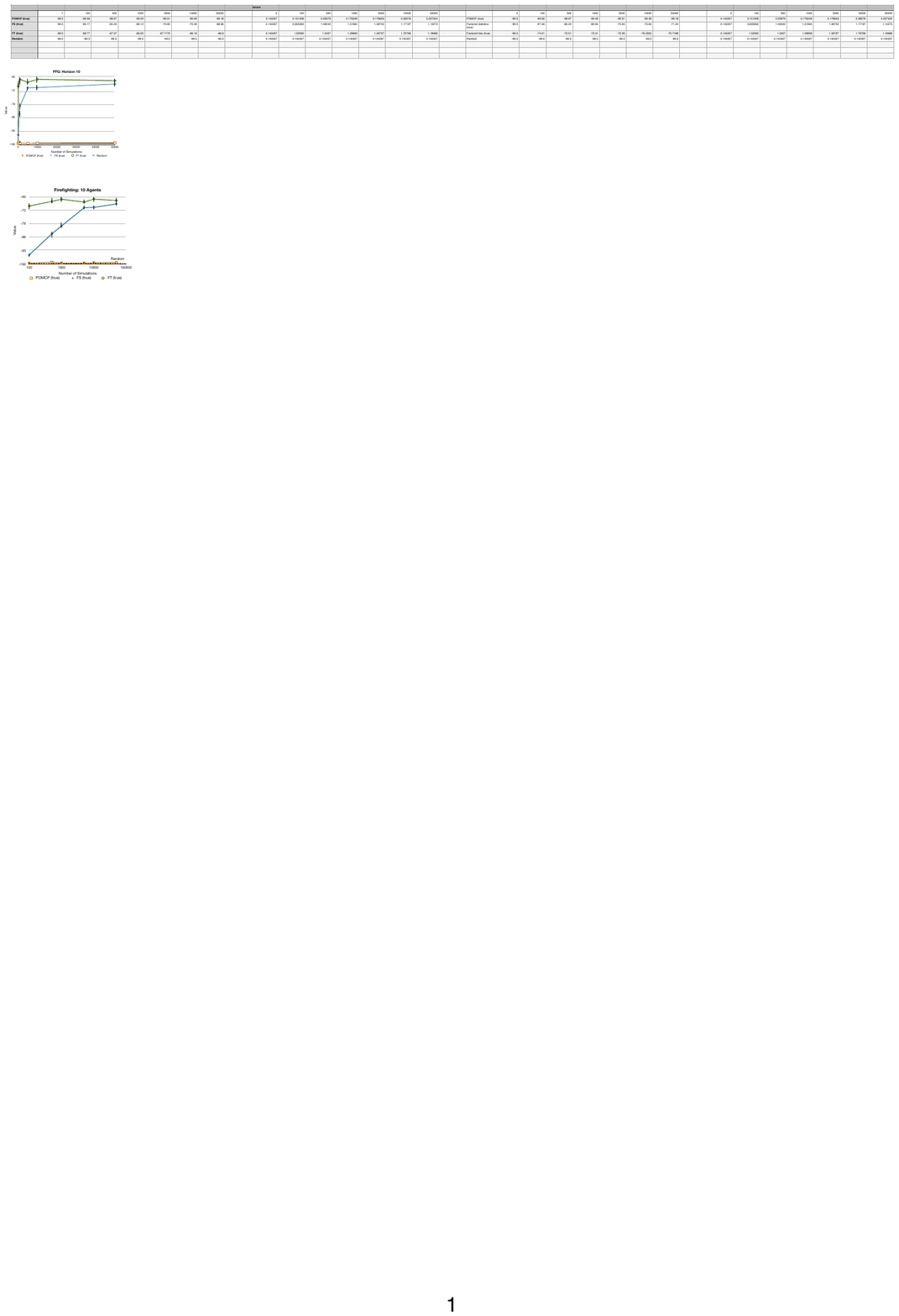}\hspace{-5pt}
 \includegraphics[width=2.1in]{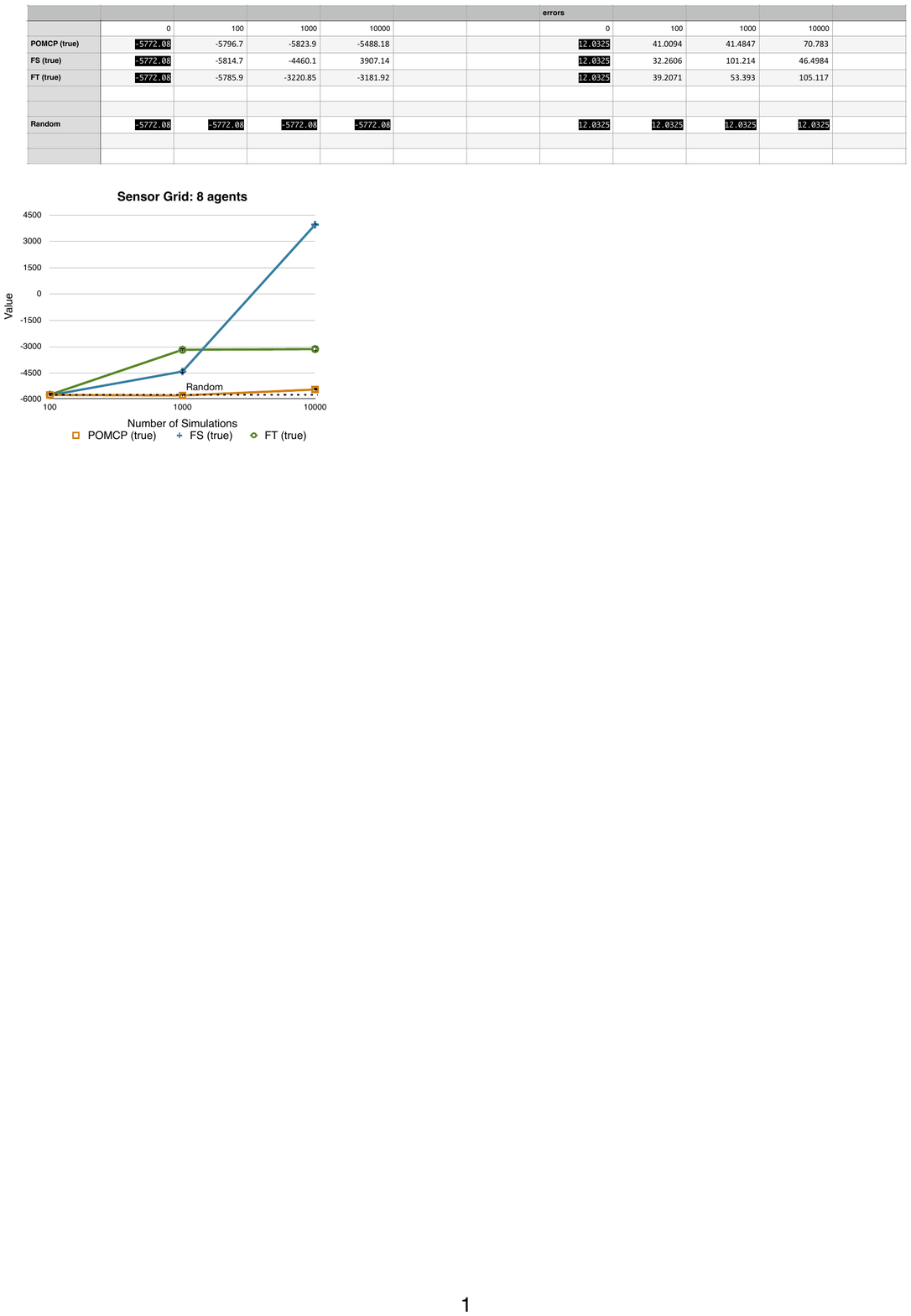} 
}

\subfigure[BA-MPOMDP results]{
\label{fig:FFGlearn}
\hspace{-5pt}
 \includegraphics[width=2.05in]{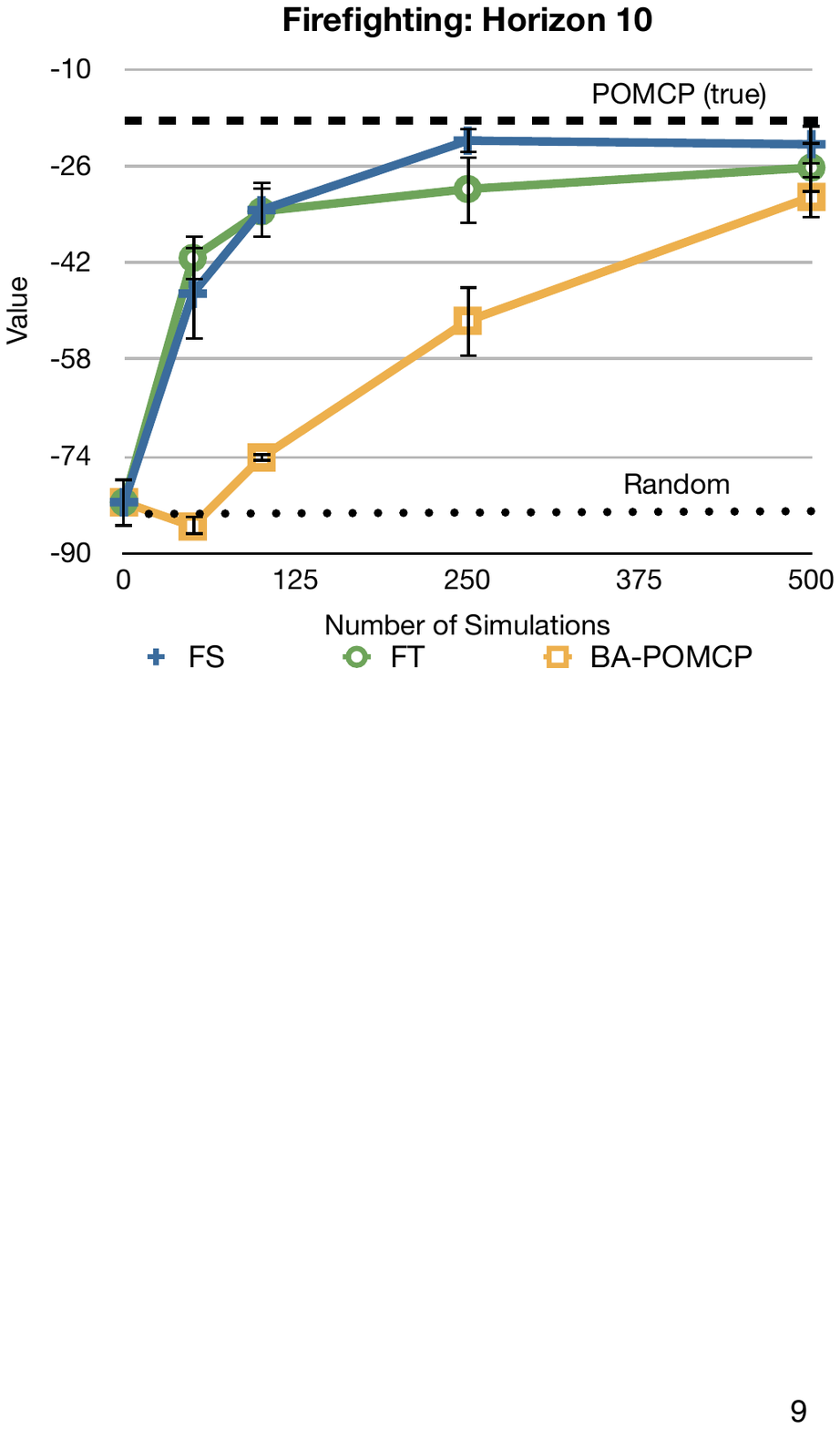}\hspace{-5pt}
\includegraphics[width=2.1in]{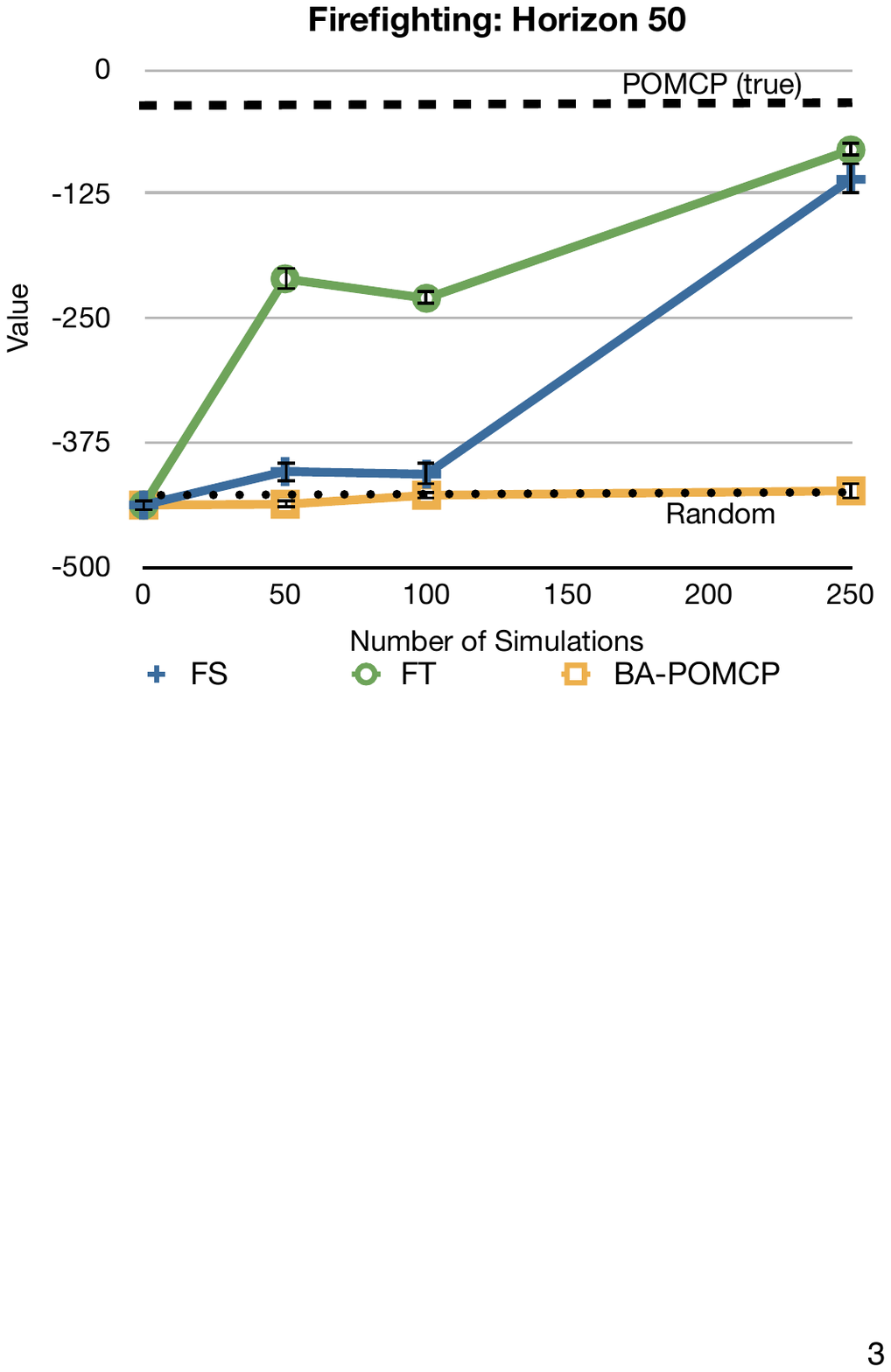}\hspace{-5pt}
\includegraphics[width=2.1in]{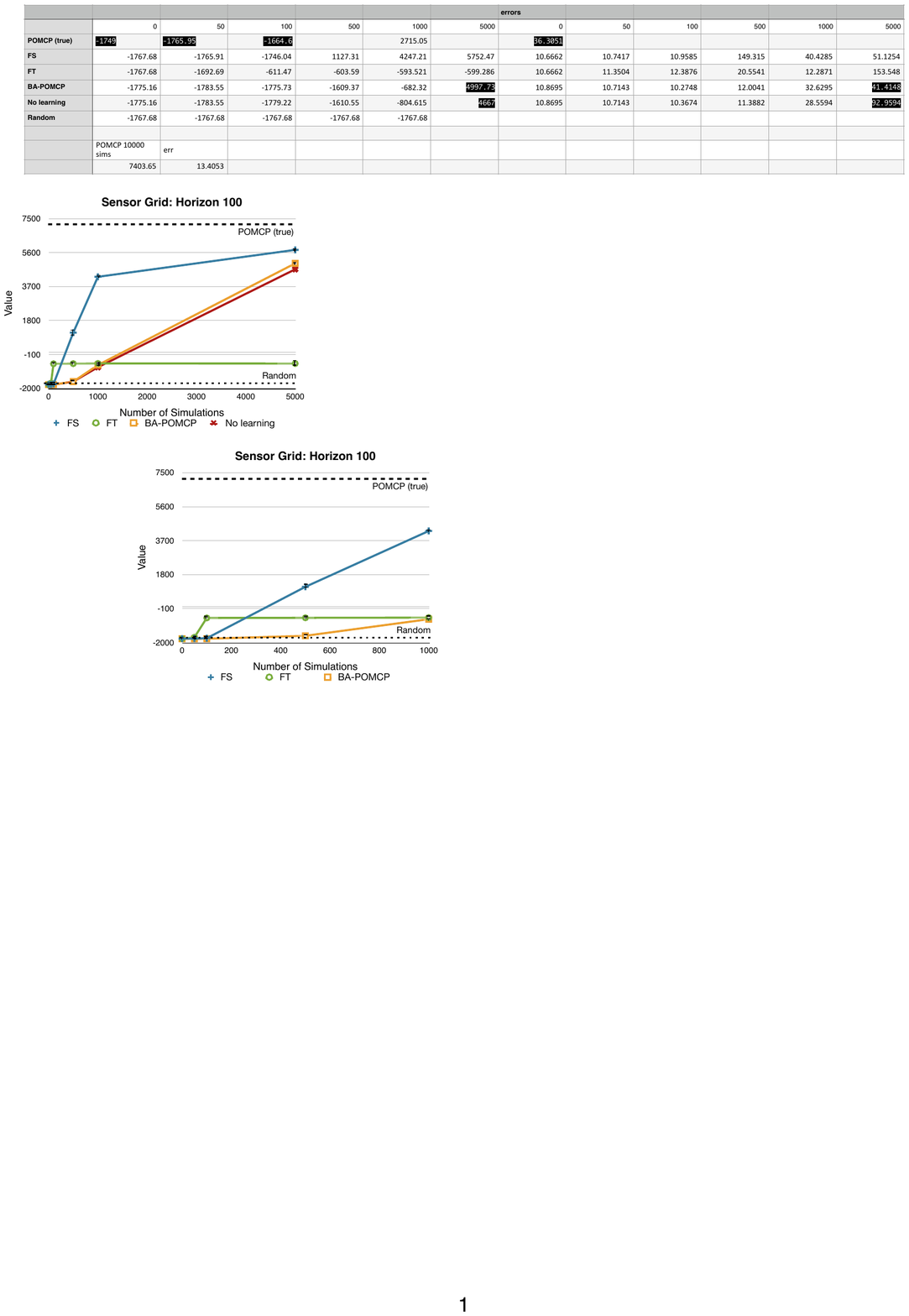} 
}
\caption{Results for (a) the planning (MPOMDP) case (log scale x-axis) and (b) the learning (BA-MPOMDP) case for the firefighting and sensor grid problems.}

\end{figure}

The results for 4-agent and 10-agent firefighting problems with horizon 10 are shown in Figure
\ref{fig:FFGtrue}.  For the 4-agent problem, POMCP performs poorly with a few simulations, but as the number of simulations increases it outperforms the other methods 
(presumably converging to an optimal solution). 
FT provides a high-quality solution with a very small number of simulations, but the resulting value plateaus due to approximation error. FS also provides a high-quality solution with a very small number of simulations, but is then able to converge to a solution that is near POMCP. In the 10-agent problem, POMCP is only able to generate a solution that is slightly better than random while the FV-POMCP methods are able to
perform much better. In fact, 
FT performs very well with a small number of samples and FS continues to improve until it reaches solution that is similar to FT. 

Similar results are seen in the sensor grid problem. POMCP outperforms a random policy as the number of simulations grows, but FS and FT produce much higher values with the available simulations. FT seems to converge to a low quality solution (in both planning and learning) due to the loss of information about the target's previous position that is no longer known to local factors. In this problem, POMCP requires over 10 minutes for an episode of 10000 simulations, making reducing the number of simulations crucial in problems of this size. 
These results clearly illustrate the benefit of FV-POMCP
by exploiting structure for planning in MASs.


\ourpar{BA-MPOMDPs}
We also investigate the learning setting (i.e., when the agents
are only given the BA-POMDP model).
Here, at the end of each episode, both the state and count vectors are reset to their initial values.
Learning in partially observable environments is extremely hard, and there 
may be many equivalence classes of transition and observation models that
are indistinguishable when learning. 
Therefore, 
we assume a reasonably good model of the transitions (e.g., because the designer may have a good idea of the dynamics), but only a poor estimate of the observation model (because the sensors may be harder to model). 

For the BRL setting, we compare to the following baseline methods:
\emph{POMCP}:  regular POMCP applied to the true model using  100,000
simulations (this is the best proxy for, and we expect this to be very close to, the optimal value), and
\emph{BA-POMCP}: regular POMCP applied to the BA-POMDP.

Results for a four agent instance of the fire fighting problem are shown in \fig{FFGlearn}, for
$\hor=10, 50$. 
In both cases, the FS and FT variants approach the
POMCP value.
For a small number of simulations FT learns very quickly, providing
significantly better values than the flat methods and better than  FS
for the increased horizon.  FS learns
more slowly, but the value is better as the number of simulations increases (as seen in the horizon
10 case) due to the use of the full
history.  After more simulations in the
horizon 10 problem, the performance of the flat model (BA-MPOMDP) improves, but
the factored methods still outperform it and this increase is less visible for
the longer horizon problem. 

Similar results are again seen in the four agent sensor grid problem. FT performs the best with a small number of simulations, but as the number increases, FS outperforms other methods. 
Again, for these problems, BA-POMCP requires over 10 minutes for each episode for the largest number of simulations, showing the need for more efficient methods. 
These experiments show that even in challenging multiagent settings
with state uncertainty, BRL methods can learn by
effectively exploiting structure.

\section{Related Work}
MCTS methods have become very popular in games, a type of multiagent setting,
but no action factorization has been exploited so far \cite{Browne12CAAIG}. 
Progressive widening \cite{Coulom07} and double progressive widening \cite{Couetoux11} have had some success in games with large (or continuous) action spaces.  
The progressive widening methods do not use the structure of the coordination graph in order to generalize value over actions, but instead must find the correct joint action out of the exponentially many that are available (which may require many trajectories).  They are also designed for fully observable scenarios, so they do not address the large observation space in MPOMDPs. 

The factorization of the history in FTs is not unlike the use of linear function
approximation for the state components in TD-Search \cite{Silver12ML}. However, in
contrast to that method, due to our particular factorization, we can still apply UCT to
aggressively search down the most promising branches of the tree.
While other methods based on Q-learning \cite{Guestrin02ICML,Kok06JMLR} exploit action
factorization, they assume agents observe individual rewards (rather than the global reward that we consider) and it is not clear how these could be incorporated in a UCT-style algorithm.

Locality of interaction has also been considered previously in decentralized POMDP methods \cite{Oliehoek12RLBook,CDC13}
in the form of factored
Dec-POMDPs \cite{Oliehoek13AAMAS,Joni11IJCAI} and networked distributed POMDPs (ND-POMDPs) \cite{Nair05,Akshat09,JillesAAMAS14}. 
These models make strict assumptions about the information that the agents can use to
choose actions (only the past history of individual actions and observations), thereby significantly lowering the resulting value  \cite{Oliehoek08JAIR}. 
ND-POMDPs also impose additional assumptions on the model (transition and observation independence and a factored reward function).
The MPOMDP model, in contrast, does not impose these restrictions. 
Instead, in MPOMDPs,
each agent knows the \emph{joint} action-observation history, so there are not
different perspectives by different agents. 
Therefore, 
1) factored Dec-POMDP and ND-POMDP methods do not apply to MPOMDPs; they specify mappings from individual histories to actions (rather than joint histories to joint actions),
2) ND-POMDP methods assume that the value function is \emph{exactly} factored as the sum of local values (`perfect locality of interaction') while in an MPOMDP, the value is only \emph{approximately} factored (since different components can correlate due to conditioning the actions on central information).
While perfect locality of interaction allows a natural factorization of the MPOMDP value function, but our method can be applied to any MPOMDP (i.e., given any factorization of the value function). 
Furthermore, current factored Dec-POMDP and ND-POMDP models generate solutions given the model in an offline fashion, while we consider online methods using a simulator in this paper. 

Our approach builds upon coordination-graphs~\cite{Guestrin01}, to perform the joint action optimization
efficiently, but factorization in one-shot problems has been considered in other settings
too.
Amin et al.~\cite{Amin11UAI} present a method to optimize graphical bandits, which relates to our
optimization approach. Since their approach replaced the UCB functionality, it is not
obvious how their approach could be integrated in POMCP.  Moreover, their work, focuses on
minimizing regret (which is not an issue in our case), and does not apply when the
factorization does not hold.
Oliehoek et al.~\cite{Oliehoek12UAI} present an factored-payoff approach that extends coordination graphs
to imperfect information settings where each agent has its own knowledge. This is not
relevant for our current algorithm, which assumes that joint observations will be received
by a centralized decision maker, but could potentially be useful to relax this assumption.

\section{Conclusions }

We presented the first method to exploit multiagent structure to produce a scalable method for Monte Carlo tree search for POMDPs. 
This approach formalizes a team of agents as a multiagent POMDP, allowing planning and BRL techniques from the POMDP literature to be applied.
However, since the number of joint actions and observations grows exponentially with the number of agents, na\"ive extensions of single agent methods will not scale well. 
To combat this problem, we introduced FV-POMCP, an online planner based on  POMCP \cite{Silver10NIPS23} that exploits multiagent structure using two novel techniques---factored statistics and factored trees--- to 
reduce 1) the number of joint actions and 2) the number of joint histories considered. 
Our empirical results demonstrate that FV-POMCP greatly increases scalability of online planning for MPOMDPs,  solving problems with 10  agents. Further investigation also shows scalability to the much more complex learning problem with four agents.
Our methods could also be used to solve POMDPs and BA-POMDPs with large action and observation spaces
 as well the recent Bayes-Adaptive extension \cite{Ng12} of the self interested I-POMDP model \cite{Gmytrasiewicz05}.

\section*{Acknowledgments}
F.O. is funded by NWO Innovational Research Incentives Scheme Veni \#639.021.336.
C.A was supported by AFOSR MURI project \#FA9550-09-1-0538 and ONR MURI project \#N000141110688.

\bibliographystyle{aaai}
\bibliography{BayesBib}

\appendix

\section{FV-POMCP Pseudo code}
\label{code}

Here we describe in more detail the algorithms for the proposed FV-POMCP variants. Both methods
can be described as a modification of POMCP's \textsc{Simulate} procedure, which is shown in \alg{POMCP}.
Each simulation is started by calling this procedure on the root node (corresponding to the
empty history, or `now') with a state sampled from the current belief. The comments in
\alg{POMCP} should make the code self-explanatory, but for a further explanation we refer
to \cite{Silver10NIPS23}.

\begin{algorithm}
\caption{\label{alg:POMCP}POMCP}
\begin{algorithmic}[1]
\Procedure{Simulate}{$s$,$h$,$depth$}
\If {$\gamma^{depth}< \epsilon$} 
\State \Return 0 \Comment{Stop when desired precision reached}
\EndIf
\If {$h \not\in T $} \Comment{if we did not visit this $h$ yet}
\ForAll{$a \in A$}
\State $T(h,a) \gets (N_{init}(h,a),V_{init}(h,a),\emptyset)$ \Comment{initialize counts for
all $a$, and particle filter}
\EndFor
\State \Return Rollout($s$,$h$,$depth$)  \Comment{do a random rollout from this node}
\EndIf
\State $a \gets \arg\max_a' Q(h,a') + c\sqrt{\frac{log N(h)}{N(h,a')}}$ \Comment{max via enumeration}
\State $(s',o,r) \sim \mathcal{G}(s,a)$ \Comment{sample a transition, observation and
reward}
\State $R \gets r + \gamma$Simulate($s'$,$(hao)$,$depth+1$)   \Comment{Recurse and receive
the return $R$}
\State $B(h) \gets B(h) \cup \{s\}$ \Comment{Add $s$ to the particle filter maintained for $h$}
\State $N(h) \gets N(h)+1$          \Comment{Update the number of times $h$ is visited\dots}
\State $N(h,a) \gets N(h,a)+1$        \Comment{\dots and how often we selected action $a$ here}
\State $Q(h,a) \gets Q(h,a) + \frac{R-Q(h,a)}{N(h,a)}$ \Comment{Incremental update of mean return}
\State \Return $R$
\EndProcedure
\end{algorithmic}
\end{algorithm}

The pseudo code for the factored statistics case (FS-FV-POMCP) is shown in \alg{FS}. The comments highlight the important
changes: there is no need to loop over the joint actions for initialization, or for
selecting the maximizing action. Also, the same return $R$ is used to update the active
expert in each component $e$.

\begin{algorithm}
\caption{\label{alg:FS}Factored Statistics}
\begin{algorithmic}[1]
\Procedure{Simulate}{$s$,$\vec h$,$depth$}
\If {$\gamma^{depth}< \epsilon$} 
\State \Return 0
\EndIf
\If {$\vec h \not\in T $} 
\ForAll{ $e \in E$ }                    \Comment{initialize statistics for component $e$}
\ForAll{ $ \vec{a}_e \in \jaGS{e}$ }    \Comment{only loop over \emph{local} joint actions}
\State $T(\vec h, \vec{a}_e) \gets (N_{init}(\vec h, \vec{a}_e),V_{init}(\vec h, \vec{a}_e),\emptyset)$
\EndFor
\EndFor
\State \Return Rollout($s$,$\vec h$,$depth$)
\EndIf
%
\State $\vec a \gets \arg\max_{\vec a}' 
\sum_e 
\Big[
Q_e(\vec h, \vec a_e') + c\sqrt{\frac{log N(\vec h+1)}{n_{\vec a_e'}}}$
\Big]
\Comment{via variable elimination}
\State $(s',\vec o,r) \sim \mathcal{G}(s,\vec a)$
\State $R \gets r + \gamma$Simulate($s'$,$(\vec h\vec a\vec o)$,$depth+1$)
\State $B(\vec h) \gets B(\vec h) \cup \{s\}$
\State $N(\vec h) \gets N(\vec h)+1$
\ForAll{$\vec e \in E$}                 \Comment{update the statistics for each component}
\State $n_{\vec a_e} \gets n_{\vec a_e} +1$ 
\State $Q_e(\vec h, \vec a_e) \gets Q_e(\vec h, \vec a_e) + \frac{R-Q_e(\vec h, \vec a_e)}{n_{\vec a_e}}$
\Comment{update the estimation of the expert for $ \vec a_e $}
\EndFor
\State \Return $R$
\EndProcedure
\end{algorithmic}
\end{algorithm}

Finally, \alg{FT} shows the pseudo code for the factored trees variant of FV-POMCP. Like
FSs, FT-FV-POMCP performs the simulations in lockstep. However, as we now maintain
statistics and particle filters for each component $e$ in separate trees, the
initialization and updating of these statistics is slightly different. 
As such, the algorithm makes clear that there is no computational advantage to factored
trees (when compared to FSs), but that the big difference is in the additional
generalization it performs.

The actual planning could take place in a number of ways: one agent could be designated
the planner, which would require this agent to broadcast the computed joint action.
Alternatively, each agent can in parallel perform an identical planning process (by, in
the case of randomized planning, syncing the random number generators). Then each agent
will compute the same joint action and execute its component. 
An interesting direction of
future work is whether the planning itself can be done more effectively by distributing
the task over the agents.

\begin{algorithm}
\caption{\label{alg:FT}Factored Trees}
\begin{algorithmic}[1]
\Procedure{Simulate}{$s$,$\vec h$,$depth$}
\If {$\gamma^{depth}< \epsilon$} 
\State \Return 0
\EndIf
\ForAll{ $e \in E$ }                    \Comment{check all components $e$}
    \If {$\vec{h}_e \not\in T_e $}          \Comment{if there is no node for $\vec{h}_e$
    in the tree for component $e$ yet}
        \ForAll{ $ \vec{a}_e \in \jaGS{e}$ }    \Comment{only loop over \emph{local} joint actions}
            \State $T_e(\vec{h}_e, \vec{a}_e) \gets (N_{init}(\vec{h}_e, \vec{a}_e),V_{init}(\vec{h}_e, \vec{a}_e),\emptyset)$
        \EndFor
    \EndIf
\EndFor
\State \Return Rollout($s$,$\vec h$,$depth$)
%
\State $\vec a \gets \arg\max_{\vec a}' 
\sum_e 
\Big[
Q_e(\vec{h}_e, \vec a_e') + c\sqrt{\frac{log N(\vec{h}_e+1)}{n_{\vec a_e'}}}$
\Big]
\Comment{via variable elimination}
\State $(s',\vec o,r) \sim \mathcal{G}(s,\vec a)$
\State $R \gets r + \gamma$Simulate($s'$,$(\vec h\vec a\vec o)$,$depth+1$)
\ForAll{$\vec e \in E$}                 \Comment{update the statisitics for each component}
\State $B(\vec{h}_e) \gets B(\vec{h}_e) \cup \{s\}$  \Comment{update the particle filter of
each tree $e$ with the same $s$}
\State $N(\vec{h}_e) \gets N(\vec{h}_e)+1$
\State $n_{\vec a_e} \gets n_{\vec a_e} +1$ 
\State $Q_e(\vec{h}_e, \vec a_e) \gets Q_e(\vec{h}_e, \vec a_e) + \frac{R-Q_e(\vec{h}_e, \vec a_e)}{n_{\vec a_e}}$
\Comment{update expert for $ \vec h_e, \vec a_e $}
\EndFor
\State \Return $R$
\EndProcedure
\end{algorithmic}
\end{algorithm}

\section{Analysis of Mixture of Experts Optimization}
\label{proofs}

\global\long\def\cip{\overset{p}{\rightarrow}}

Here we analyze the behavior of our mixtures of experts under sample
policy $\jpol$. In performing this analysis we compare to the case
where the true value function $Q(\ja)$ is factored in $E$ components
\[
Q(\ja)=\sum_{e=1}^{E}Q_{e}(\jaG e)
\]
and corrupted by zero-mean noise~$\nu$. As such we establish the
performance in cases where the actual value function is `close'
to factored. In the below, we will write $\neig e$ for the neighborhood
of component $e$. That is the set of other components $e'$ which
have an overlap with $e$: those that have at least one agent participating
in them that also participates in $e$). In this analysis, we will
assume that all experts are weighted uniformly ($\alpha_{e}=\frac{1}{E}$),
and ignore this constant which is not relevant for determining the
maximizing action.

\begin{theorem}The estimate $\hat{Q}$ of $Q$ made by a mixture
of experts converges in probability to the true value plus a sample
policy dependent bias term:

\[
\hat{Q}(\ja)\cip Q(\ja)+B_{\jpol}(\ja).
\]
The bias is given by a sum of biases induced by pairs $e,e'$ of overlapping
value components:
\[
B_{\jpol}(\ja)\defas\sum_{e}\sum_{e'\neq e}\sum_{\overline{\jaG{e'\setminus e}}}\jpol(\overline{\jaG{e'\setminus e}}|\jaG e)\QI{e'}(\overline{\jaG{e'\setminus e}},\jaG{e'\cap e}).
\]
Here $\jaG{e'\cap e}$ is the action of the agents that participate
both in $e$ and $e'$ (specified by $\ja$) and $\overline{\jaG{e'\setminus e}}$
are the actions of agents in $e'$ that are not in $e$ (these are
summed over as emphasised by the overlining).

\end{theorem}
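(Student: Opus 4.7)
My plan is to derive the limiting value of each expert $\hat{Q}_e(\jaG{e})$ via the law of large numbers, expand using the assumed factored-plus-noise form of the true $Q$, and then sum the local predictions.

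First, I would observe that under sampling policy $\jpol$, each expert $\hat{Q}_e(\jaG{e})$ is by construction the empirical mean of $Q(\ja)$ over those trajectories in which the local action $\jaG{e}$ was played. Assuming $\jpol$ assigns positive probability to every local $\jaG{e}$, the number of samples falling into this bin grows to infinity, so by the weak law of large numbers
\[
\hat{Q}_e(\jaG{e}) \;\cip\; \E_{\jpol}\!\left[Q(\ja)\,\big|\,\jaG{e}\right].
\]
Substituting $Q(\ja)=\sum_{e'}\QI{e'}(\jaG{e'})+\nu$ with $\E[\nu]=0$, the noise vanishes in the limit, leaving
\[
\E_{\jpol}\!\left[Q(\ja)\,\big|\,\jaG{e}\right] \;=\; \sum_{e'}\E_{\jpol}\!\left[\QI{e'}(\jaG{e'})\,\big|\,\jaG{e}\right].
\]

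Next, I would split the inner sum by whether $e'=e$ or $e'\neq e$. For $e'=e$ conditioning fixes $\jaG{e}$ and the term is simply $\QI{e}(\jaG{e})$. For $e'\neq e$, I partition the agents in $e'$ into the overlap with $e$ (action $\jaG{e'\cap e}$, determined by the conditioning on $\jaG{e}$) and the remainder $e'\setminus e$ (action $\overline{\jaG{e'\setminus e}}$, which must be marginalised according to the conditional sampling distribution). This gives
\[
\E_{\jpol}\!\left[\QI{e'}(\jaG{e'})\,\big|\,\jaG{e}\right] \;=\; \sum_{\overline{\jaG{e'\setminus e}}}\jpol(\overline{\jaG{e'\setminus e}}\,|\,\jaG{e})\,\QI{e'}(\overline{\jaG{e'\setminus e}},\jaG{e'\cap e}).
\]

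Finally, summing over $e$ yields $\hat{Q}(\ja)\cip\sum_{e}\QI{e}(\jaG{e})+B_{\jpol}(\ja)=Q(\ja)+B_{\jpol}(\ja)$, with $B_{\jpol}$ exactly as claimed. The main obstacle, such as it is, is a bookkeeping one: one must be careful that components $e'$ disjoint from $e$ still appear in the bias sum (their contribution reduces to $\sum_{\overline{\jaG{e'}}}\jpol(\overline{\jaG{e'}}\,|\,\jaG{e})\QI{e'}(\overline{\jaG{e'}})$ with an empty intersection), and that for overlapping components the conditioning really does pin down $\jaG{e'\cap e}$ deterministically from $\jaG{e}$. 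Handling the zero-mean noise requires only that the variance is finite so that the LLN applies to the per-bin averages; this is implicit in the setup. A minor subsidiary point is that convergence in probability is preserved under finite sums, so the limit of the mixture equals the sum of the limits.
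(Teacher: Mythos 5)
Your proposal is correct and follows essentially the same route as the paper's proof: identify each expert's limit as the conditional expectation $\E_{\jpol}[Q(\ja)\mid\jaG{e}]$ via the law of large numbers, expand the factored-plus-zero-mean-noise form of $Q$, split off the $e'=e$ term, and marginalize the actions of agents in $e'\setminus e$ under $\jpol(\cdot\mid\jaG{e})$ before summing over components. Your added bookkeeping remarks (disjoint components still contributing a constant term, finite variance for the LLN, and preservation of convergence in probability under finite sums) are consistent with, and slightly more explicit than, the paper's presentation.
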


\begin{proof}

Suppose that we have drawn a set of samples $\argsI r1,\dots,\argsI rK$
according to $\jpol$. For each component $e$ and $\jaG e$, we have
an expert that estimates $\hat{Q}(\jaG e)$. Let $\mathcal{R}(\jaG e)$
be the subset of samples received where we took local joint action
$\jaG e$. The corresponding expert will estimate

\[
\hat{Q}(\jaG e):=\frac{1}{\mathcal{R}(\jaG e)}\sum_{r_{\jaG e}\in\mathcal{R}(\jaG e)}r_{\jaG e}
\]
Now, the expected sample that this expert receives is
\begin{eqnarray*}
\E\left[r_{\jaG e}|\jpol\right] & = & \E_{\nu}\left\{ \sum_{\jaG{\excl e}}\jpol(\jaG{\excl e}|\jaG e)Q(\ja)+\nu\right\} \\
& = & \sum_{\jaG{\excl e}}\jpol(\jaG{\excl e}|\jaG e)\left[\sum_{e'}\QI{e'}(\jaG{e'})\right]+\E_{\nu}\nu\\
& = & \QI e(\jaG e)+\sum_{\jaG{\excl e}}\jpol(\jaG{\excl e}|\jaG e)\sum_{e'\neq e}\QI{e'}(\jaG{e'})+0\\
& = & \QI e(\jaG e)+\sum_{e'\neq e}\sum_{\jaG{e'\setminus e}}\jpol(\jaG{e'\setminus e}|\jaG e)\QI{e'}(\jaG{e'\setminus e},\jaG{e'\cap e})
\end{eqnarray*}
which means that the estimate of an expert converges in probability
to a biased estimate
\[
\hat{Q}(\jaG e)\cip\QI e(\jaG e)+\sum_{e'\neq e}\sum_{\jaG{e'\setminus e}}\jpol(\jaG{e'\setminus e}|\jaG e)\QI{e'}(\jaG{e'\setminus e},\jaG{e'\cap e}).
\]
This, in turn means that the mixture of experts

\begin{align*}
\hat{Q}(\ja)=\sum_{e}\argsI{\hat{Q}}e(\jaG e)\cip & \sum_{e}\left[\QI e(\jaG e)+\sum_{e'\neq e}\sum_{\overline{\jaG{e'\setminus e}}}\jpol(\overline{\jaG{e'\setminus e}}|\jaG e)\QI{e'}(\overline{\jaG{e'\setminus e}},\jaG{e'\cap e})\right]\\
= & \sum_{e}\QI e(\jaG e)+\sum_{e}\sum_{e'\neq e}\sum_{\overline{\jaG{e'\setminus e}}}\jpol(\overline{\jaG{e'\setminus e}}|\jaG e)\QI{e'}(\overline{\jaG{e'\setminus e}},\jaG{e'\cap e})\\
= & \sum_{e}\QI e(\jaG e)+B_{\jpol}(\ja),
\end{align*}
as claimed. \end{proof}

As is clear from the definition of the bias term, it is caused by
correlations in the sample policy and the fact that we are over counting
value from other components. When there is no overlap in the payoff
components, and we use a sample policy we use is `\emph{component-wise}',
i.e., $\jpol(\overline{\jaG{e'\setminus e}}|\jaG e)=\jpol(\overline{\jaG{e'\setminus e}}|\jaG e')=\jpol(\overline{\jaG{e'\setminus e}})$,
the effect of this bias can be disregarded: in such a case, even though
all components $e'\neq e$ contribute to the bias of expert $e$ this
bias is constant for those components $e'\not\in\neig e$. Since we
care only about the relative values of joint actions $\ja$, only
overlapping components actually contribute to introducing error. This
is clearly illustrated for the case where there are no overlapping
components.

\begin{theorem}In the case that the value components do not overlap,
mixture of experts optimization recovers the maximizing joint action.
\end{theorem}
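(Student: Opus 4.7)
The plan is to deduce this corollary directly from the preceding theorem, whose statement already tells us that the mixture of experts estimate converges in probability to $Q(\ja) + B_{\jpol}(\ja)$. Since the $\arg\max$ operation is invariant under addition of any function constant in $\ja$, it suffices to show that under the non-overlap hypothesis (together with a component-wise sampling policy, as foreshadowed in the paragraph preceding the statement) the bias $B_{\jpol}(\ja)$ does not depend on $\ja$.

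First, I would unpack the non-overlap assumption at the level of agent indices: for every pair of distinct components $e,e'$, the set of agents in $e \cap e'$ is empty, so $\jaG{e'\cap e}$ is the empty action profile and $\jaG{e'\setminus e} = \jaG{e'}$ ranges over all of $\jaGS{e'}$. Substituting into the bias formula from the previous theorem gives
\begin{equation*}
B_{\jpol}(\ja) = \sum_{e}\sum_{e'\neq e}\sum_{\overline{\jaG{e'}}} \jpol(\overline{\jaG{e'}}\mid \jaG e)\, \QI{e'}(\overline{\jaG{e'}}).
\end{equation*}
Next, I would invoke the component-wise assumption on $\jpol$, which in the non-overlap case reduces to $\jpol(\overline{\jaG{e'}}\mid \jaG e) = \jpol(\overline{\jaG{e'}})$ because the two local joint actions involve disjoint sets of agents. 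The bias then collapses to a sum of marginal expectations,
\begin{equation*}
B_{\jpol}(\ja) = \sum_{e}\sum_{e'\neq e} \E_{\jpol}\!\left[\QI{e'}(\jaG{e'})\right],
\end{equation*}
which is manifestly independent of $\ja$.

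To conclude, let $C \defas B_{\jpol}$ denote this constant. By the previous theorem, $\hat{Q}(\ja) \cip Q(\ja) + C$ for every $\ja$, and since the joint action space is finite, the convergence is uniform over $\ja$. Therefore, in the limit, $\arg\max_{\ja} \hat{Q}(\ja) = \arg\max_{\ja} [Q(\ja) + C] = \arg\max_{\ja} Q(\ja)$, so mixture of experts optimization recovers the optimal joint action.

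The only real subtlety, and the one I would be most careful about when writing the final proof, is the role of the component-wise policy assumption: the theorem as stated only mentions non-overlap, but without the component-wise property the conditional $\jpol(\jaG{e'}\mid\jaG e)$ could still retain a dependence on $\jaG e$ and leave residual $\ja$-dependence in the bias. I would either explicitly reintroduce the component-wise hypothesis (matching the in-body version of the theorem) or argue that it is implicit in the paragraph of discussion that immediately precedes the statement.
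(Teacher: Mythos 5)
Your proof is correct and follows essentially the same route as the paper's: both arguments reduce to showing that, under non-overlap and a component-wise sampling policy, the bias term collapses to a sum of marginal expectations $\sum_{e}\sum_{e'\neq e}\E_{\jpol}[\QI{e'}(\jaG{e'})]$ that is constant in $\ja$, so the $\argmax$ is unaffected. Your concern about the component-wise hypothesis is well placed but already resolved by the paper itself, whose in-body statement of the theorem includes that hypothesis and whose appendix proof silently invokes it by writing the marginal $\jpolG{e'}(\jaG{e'})$ in place of the conditional.
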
\begin{proof}

In this case,

\[
\argsI{\hat{Q}}e(\jaG e)\cip\QI e(\jaG e)+\sum_{e'\neq e}\sum_{\jaG{e'}}\jpolG{e'}(\jaG{e'})\QI{e'}(\jaG{e'})
\]
and the bias does not depend on $\jaG e$, such that
\[
\argmax_{\jaG e}\argsI{\hat{Q}}e(\jaG e)\cip\argmax_{\jaG e}\QI e(\jaG e)+C=\argmax_{\jaG e}\QI e(\jaG e).
\]
And for the joint optimization:
\begin{align*}
\hat{Q}(\ja)=\sum_{e}\argsI{\hat{Q}}e(\jaG e)\cip & \sum_{e}\left[\QI e(\jaG e)+\sum_{e'\neq e}\sum_{\jaG{e'}}\jpolG{e'}(\jaG{e'})\QI{e'}(\jaG{e'})\right]\\
= & \sum_{e}\QI e(\jaG e)+\sum_{e}\sum_{e'\neq e}\sum_{\jaG{e'}}\jpolG{e'}(\jaG{e'})\QI{e'}(\jaG{e'})\\
= & \sum_{e}\QI e(\jaG e)+(E-1)\sum_{e}\sum_{\jaG{e}'}\jpolG{e}(\jaG{e}')\QI{e}(\jaG{e}')\\
= & \sum_{e}\left(\QI e(\jaG e)+(E-1)SA(e,\jpol)\right)
\end{align*}
where $SA(e,\jpol)=\sum_{\jaG{e}'}\jpolG{e}(\jaG{e}')\QI{e}(\jaG{e}')$
is the sampled average payoff of component $e$ which affects our
value estimates, but which does not affect the maximizing joint action:
\[
\argmax_{\ja}\left[\sum_{e}\left(\QI e(\jaG e)+(E-1)SA(e,\jpol)\right)\right]=\argmax_{\ja}\sum_{e}\QI e(\jaG e)
\]
\end{proof}

Similar reasoning can be used to establish bounds on the performance
of mixture of expert optimization in cases with overlap, as is shown
by the next theorem.

\begin{theorem} If for all overlapping components $e,e'$, and any
two `intersection action profiles' $\jaG{e'\cap e},\jaG{e'\cap e}'$
for their intersection, the true value function satisfies that

\[
\forall\jaG{e'\setminus e}\qquad
\QI{e'}(\jaG{e'\setminus e},\jaG{e'\cap e})
-
\QI{e'}(\jaG{e'\setminus e},\jaG{e'\cap e}')
\leq\frac{\epsilon}{E\cdot\left|\neig e\right|\cdot\left|\jaGS{e'\setminus e}\right|\cdot\jpol(\jaG{e'\setminus e})},
\]
with $\left|\jaGS{e'\setminus e}\right|$ the number of intersection
action profiles, then mixture of experts optimization, in the limit
will return a joint action whose value solution that lies within $\epsilon$
of the optimal solution. \end{theorem}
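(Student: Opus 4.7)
The strategy is to lift the pointwise convergence result of the first theorem, $\hat{Q}(\ja) \cip Q(\ja) + B_{\jpol}(\ja)$, into a guarantee on the maximizer. Let $\ja^{*} \in \argmax_{\ja} Q(\ja)$ be the true optimum and let $\hat{\ja}$ be the joint action returned by mixture-of-experts optimization; in the limit $\hat{\ja} \in \argmax_{\ja}[Q(\ja)+B_{\jpol}(\ja)]$. By the defining inequality of $\hat{\ja}$,
\[
Q(\ja^{*}) + B_{\jpol}(\ja^{*}) \;\leq\; Q(\hat{\ja}) + B_{\jpol}(\hat{\ja}),
\]
so it suffices to bound $B_{\jpol}(\hat{\ja}) - B_{\jpol}(\ja^{*}) \leq \epsilon$, because that immediately yields $Q(\ja^{*}) - Q(\hat{\ja}) \leq \epsilon$.

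Next, I would split the double sum defining $B_{\jpol}$ into contributions from non-overlapping pairs $(e,e')$ with $e' \notin \neig{e}$ and overlapping pairs with $e' \in \neig{e}$. For a non-overlapping pair the ``shared'' action $\jaG{e'\cap e}$ is vacuous, so the inner sum depends only on $\jaG{e}$ through the sampling policy; under the component-wise assumption on $\jpol$ (inherited from the setting of Theorem 2 and implicit in the statement via the marginal $\jpol(\jaG{e'\setminus e})$), this contribution is a constant independent of $\ja$ and therefore cancels in the difference $B_{\jpol}(\hat{\ja}) - B_{\jpol}(\ja^{*})$. Hence only the neighborhood terms survive, and
\[
B_{\jpol}(\hat{\ja}) - B_{\jpol}(\ja^{*}) = \sum_{e}\sum_{e' \in \neig{e}}\sum_{\overline{\jaG{e'\setminus e}}} \jpol(\overline{\jaG{e'\setminus e}}) \Bigl[\QI{e'}(\overline{\jaG{e'\setminus e}},\hat{\jaG{e'\cap e}}) - \QI{e'}(\overline{\jaG{e'\setminus e}},\jaG{e'\cap e}^{*})\Bigr].
\]

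Now I would apply the hypothesis of the theorem to each bracketed difference. The bound gives, for every $\overline{\jaG{e'\setminus e}}$,
\[
\QI{e'}(\overline{\jaG{e'\setminus e}},\hat{\jaG{e'\cap e}}) - \QI{e'}(\overline{\jaG{e'\setminus e}},\jaG{e'\cap e}^{*}) \;\leq\; \frac{\epsilon}{E\cdot |\neig{e}| \cdot |\jaGS{e'\setminus e}| \cdot \jpol(\overline{\jaG{e'\setminus e}})}.
\]
Multiplying by $\jpol(\overline{\jaG{e'\setminus e}})$ and summing over the $|\jaGS{e'\setminus e}|$ values of $\overline{\jaG{e'\setminus e}}$, the $\jpol$-factors cancel and the inner sum is bounded by $\epsilon/(E\cdot|\neig{e}|)$. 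Summing over $e' \in \neig{e}$ yields $\epsilon/E$ per component, and summing over the $E$ components yields $\epsilon$, as desired.

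The main obstacle is the careful bookkeeping in the first step: justifying that the non-overlapping contributions truly cancel requires being precise about the component-wise factorization of $\jpol$ (which is strictly speaking an extra hypothesis carried over from Theorem 2 and should be stated explicitly). A secondary subtlety is that the hypothesis only controls a one-sided difference, so one must verify that the inequality $Q(\ja^{*}) + B_{\jpol}(\ja^{*}) \leq Q(\hat{\ja}) + B_{\jpol}(\hat{\ja})$ is applied in the correct direction, making sure the bound is invoked with $\jaG{e'\cap e} = \hat{\jaG{e'\cap e}}$ and $\jaG{e'\cap e}' = \jaG{e'\cap e}^{*}$. The rest of the argument is a sequence of telescoped counting steps and is routine.
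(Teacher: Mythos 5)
Your proposal is correct and follows essentially the same route as the paper: it reduces the claim to bounding the difference in bias between two joint actions, cancels the non-overlapping pairs under the component-wise policy, and telescopes the hypothesis over $\overline{\jaG{e'\setminus e}}$, the neighbors $e'\in\neig{e}$, and the $E$ components. The only (welcome) difference is that you make explicit the standard step from a uniform bias-difference bound to $\epsilon$-optimality of the returned maximizer, which the paper leaves implicit.
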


\begin{proof}

Bias by itself is no problem, but different bias for different joint
actions is, because that may cause us to select the wrong action.
As such, we set out to bound
\[
\forall_{\ja,\ja'}\qquad\left|B_{\jpol}(\ja)-B_{\jpol}(\ja')\right|\leq\epsilon.
\]
As explained, only terms where the bias is different for two actions
$\ja,\ja'$ matter. As such we will omit terms that cancel out. In
particular, we have that if two components $e,e'$ do not overlap,
the expression

\[
\sum_{\overline{\jaG{e'\setminus e}}}\jpol(\overline{\jaG{e'\setminus e}}|\jaG e)\QI{e'}(\overline{\jaG{e'\setminus e}},\jaG{e'\cap e})-\sum_{\overline{\jaG{e'\setminus e}}}\jpol(\overline{\jaG{e'\setminus e}}|\jaG e')\QI{e'}(\overline{\jaG{e'\setminus e}},\jaG{e'\cap e}')
\]
reduces to $\sum_{\overline{\jaG{e'}}}\jpol(\overline{\jaG{e'}}|\jaG e)\QI{e'}(\overline{\jaG{e'}})-\sum_{\overline{\jaG{e'}}}\jpol(\overline{\jaG{e'}}|\jaG e')\QI{e'}(\overline{\jaG{e'}})$
and hence vanishes under a componentwise policy. We use this insight
to define the bias in terms of neighborhood bias. Let us write $\neig{e}$
for the set of edges $\left\{ e'\right\} $ that have overlap with
$e$, and let's write $\jaG{\neig{e}}$ for the joint action that
specifies actions for that entire overlap, then we can write this
as:

\[
B_{\jpol}(\ja)\defas\sum_{e=1}^{E}B_{\jpol}^{e}(\jaG{\neig{e}})
\]
with\textbf{
\[
B_{\jpol}^{e}(\jaG{\neig{e}})=\sum_{e'\in\neig e}B_{\jpol}^{{e'\cap e}}(\jaG{e'\cap e})
\]
}component $e$'s `neighborhood bias', with

\[
B_{\jpol}^{{e'\rightarrow e}}(\jaG{e})\triangleq\sum_{\overline{\jaG{e'\setminus e}}}\jpol(\overline{\jaG{e'\setminus e}}|\jaG e)\QI{e'}(\overline{\jaG{e'\setminus e}},\jaG{e'\cap e})
\]
the `intersection bias': the bias introduced on $e$ via the overlap
between $e'$ and $e$.

Now, we show how we can guarantee that the bias is small, by guaranteeing
that the intersection biases are small. This gives a conservative
bound, since different biases may very well cancel out. Artbitrarily
select two actions, W.l.o.g. we select $\ja$ to be the larger-biased
joint action. We need to guarantee that

\begin{align*}
& B_{\jpol}(\ja)-B_{\jpol}(\ja')\leq\epsilon\\
\leftrightarrow\quad & \sum_{e=1}^{E}\sum_{e'\in\neig e}B_{\jpol}^{{e'\rightarrow e}}(\jaG{e})-\sum_{e=1}^{E}\sum_{e'\in\neig e}B_{\jpol}^{{e'\rightarrow e}}(\jaG{e}')\leq\epsilon\\
\leftrightarrow\quad & \sum_{e=1}^{E}\left[\sum_{e'\in\neig e}B_{\jpol}^{{e'\rightarrow e}}(\jaG{e})-\sum_{e'\in\neig e}B_{\jpol}^{{e'\rightarrow e}}(\jaG{e}')\right]\leq\epsilon\\
\leftarrow\quad & \sum_{e'\in\neig e}B_{\jpol}^{{e'\rightarrow e}}(\jaG{e})-\sum_{e'\in\neig e}B_{\jpol}^{{e'\rightarrow e}}(\jaG{e}')\leq\frac{\epsilon}{E}\quad\forall e\\
\leftrightarrow\quad & \sum_{e'\in\neig e}\left[B_{\jpol}^{{e'\rightarrow e}}(\jaG{e})-B_{\jpol}^{{e'\rightarrow e}}(\jaG{e}')\right]\leq\frac{\epsilon}{E}\quad\forall e\\
\leftarrow\quad & \left[B_{\jpol}^{{e'\rightarrow e}}(\jaG{e})-B_{\jpol}^{{e'\rightarrow e}}(\jaG{e}')\right]\leq\frac{\epsilon}{E\times\left|\neig e\right|}\quad\forall e,e'
\end{align*}
let $\epsilon'\triangleq\frac{\epsilon}{E\times\left|\neig e\right|}$,
we want
\begin{eqnarray*}
& & \left[B_{\jpol}^{{e'\rightarrow e}}(\jaG{e})-B_{\jpol}^{{e'\rightarrow e}}(\jaG{e}')\right]\leq\epsilon'\\
& & \sum_{\overline{\jaG{e'\setminus e}}}\jpol(\overline{\jaG{e'\setminus e}}|\jaG e)\QI{e'}(\overline{\jaG{e'\setminus e}},\jaG{e'\cap e})-\sum_{\overline{\jaG{e'\setminus e}}}\jpol(\overline{\jaG{e'\setminus e}}|\jaG e')\QI{e'}(\overline{\jaG{e'\setminus e}},\jaG{e'\cap e}')\leq\epsilon'\\
\leftrightarrow & & \sum_{\overline{\jaG{e'\setminus e}}}\left[\jpol(\overline{\jaG{e'\setminus e}}|\jaG e)\QI{e'}(\overline{\jaG{e'\setminus e}},\jaG{e'\cap e})-\jpol(\overline{\jaG{e'\setminus e}}|\jaG e')\QI{e'}(\overline{\jaG{e'\setminus e}},\jaG{e'\cap e}')\right]\leq\epsilon'\\
\leftarrow & & \jpol(\overline{\jaG{e'\setminus e}}|\jaG e)\QI{e'}(\overline{\jaG{e'\setminus e}},\jaG{e'\cap e})-\jpol(\overline{\jaG{e'\setminus e}}|\jaG e')\QI{e'}(\overline{\jaG{e'\setminus e}},\jaG{e'\cap e}')\leq\frac{\epsilon'}{\left|\jaGS{e'\setminus e}\right|}\qquad\forall_{\overline{\jaG{e'\setminus e}}}
\end{eqnarray*}
Under a component-wise policy $\jpol(\overline{\jaG{e'\setminus e}}|\jaG e)=\jpol(\overline{\jaG{e'\setminus e}}|\jaG e')=\jpol(\overline{\jaG{e'\setminus e}})$
and thus
\begin{eqnarray*}
& & \jpol(\overline{\jaG{e'\setminus e}})\left[\QI{e'}(\overline{\jaG{e'\setminus e}},\jaG{e'\cap e})-\QI{e'}(\overline{\jaG{e'\setminus e}},\jaG{e'\cap e}')\right]\leq\frac{\epsilon'}{\left|\jaGS{e'\setminus e}\right|}\qquad\forall_{\overline{\jaG{e'\setminus e}}}\\
\leftrightarrow & & \QI{e'}(\overline{\jaG{e'\setminus e}},\jaG{e'\cap e})-\QI{e'}(\overline{\jaG{e'\setminus e}},\jaG{e'\cap e}')\leq\frac{\epsilon'}{\left|\jaGS{e'\setminus e}\right|\jpol(\overline{\jaG{e'\setminus e}})}\qquad\forall_{\overline{\jaG{e'\setminus e}}}
\end{eqnarray*}
Putting it all together, if the factorized Q function satisfies that,
for all the component intersections, the following holds:
\[
\forall\overline{\jaG{e'\setminus e}},\jaG{e'\cap e},\jaG{e'\cap e}'\qquad\QI{e'}(\overline{\jaG{e'\setminus e}},\jaG{e'\cap e})-\QI{e'}(\overline{\jaG{e'\setminus e}},\jaG{e'\cap e}')\leq\frac{\epsilon}{E\left|\neig e\right|\left|\jaGS{e'\setminus e}\right|\jpol(\overline{\jaG{e'\setminus e}})}
\]
we are guaranteed to find an $\epsilon$-(absolute-error)-approximate
solution. \end{proof}

\end{document}